\DeclareMathOperator*{\argmax}{arg\,max}
\def\dtv{d_{\mathrm{TV}}}
\def\gwide{\widehat{g}}
\def\bbX{\mathbb{X}}
\def\be{\mathbf{e}}
\def\mD{d}
\def\mX{\mathcal{X}}
\def\mY{\mathcal{Y}}
\def\mN{\mathcal{N}}
\def\mB{\mathcal{B}}
\def\bbR{\mathbb{R}}
\def\bbE{\mathbb{E}}
\def\bbP{\mathbb{P}}
\def\sbar{\bar{s}}
\def\smin{\mathrm{min}, s}
\def\bx{\mathbf{x}}
\def\bxs{\mathbf{x}_s}
\def\bxsb{\mathbf{x}_{\sbar}}
\def\bxsmin{\mathbf{x}_{\smin}}
\def\bz{\mathbf{z}}
\def\bX{\mathbf{X}}
\def\bXs{\mathbf{X}_s}
\def\bXsb{\mathbf{X}_{\sbar}}
\def\pX{p_{\bX}}
\def\ppX{p'_{\bX}}
\def\pXx{p_{\bX}(\bx)}
\def\pXsx{p_{\bXs}(\bxs)}
\def\pXsb{p_{\bX_{\sbar}}}
\def\ppXsb{p'_{\bX_{\sbar}}}
\def\pXsbx{p_{\bX_{\sbar}}(\bxsb)}
\def\ppXsbx{p'_{\bX_{\sbar}}(\bxsb)}
\def\pXsXt{p_{\bX_s | \bX_t}}
\def\pXsbXs{p_{\bXsb | \bXs = \bxs}}
\def\ppXsbXs{p'_{\bXsb | \bXs = \bxs}}
\def\pXsbXsz{p_{\bXsb | \bXs = z}}
\def\pXsbXszmax{p_{\bXsb | \bXs = z^{*}}}
\def\ppXsbXsz{p'_{\bXsb | \bXs = z}}
\def\ppXsbXszmax{p'_{\bXsb | \bXs = z^{*}}}
\spnewtheorem{assumption}[theorem]{Assumption}{\bfseries}{\itshape}
\begin{document}

\title{On the Robustness of Global Feature Effect Explanations}
%\titlerunning{On the Robustness of Global Feature Effect Explanations}

\author{Hubert Baniecki\inst{1} (\Letter) \and Giuseppe Casalicchio\inst{2,3}\and\\Bernd Bischl\inst{2,3} \and Przemyslaw Biecek\inst{1,4}}
\authorrunning{H. Baniecki et al.}
\institute{University of Warsaw, Poland, \texttt{h.baniecki@uw.edu.pl} \and LMU Munich, Germany \and Munich Center for Machine Learning (MCML), Germany \and Warsaw University of Technology, Poland}

\maketitle              % typeset the header of the contribution

\begin{abstract}
We study the robustness of global post-hoc explanations for predictive models trained on tabular data. Effects of predictor features in black-box supervised learning are an essential diagnostic tool for model debugging and scientific discovery in applied sciences. However, how vulnerable they are to data and model perturbations remains an open research question. We introduce several theoretical bounds for evaluating the robustness of partial dependence plots and accumulated local effects. Our experimental results with synthetic and real-world datasets quantify the gap between the best and worst-case scenarios of (mis)interpreting machine learning predictions globally.

\keywords{model-agnostic explainable AI \and post-hoc interpretability \and partial dependence plot \and accumulated local effects}
\end{abstract}

\section{Introduction}

Post-hoc explainability methods have become a standard tool for interpreting black-box machine learning (ML) models~\citep{baniecki2023grammar,bodria2023benchmarking,schwalbe2023comprehensive}. While the majority of popular explanation methods focus on the \emph{local} perspective of a particular prediction, e.g., feature attributions~\citep{ribeiro2016why,lundberg2017unified} and counterfactual explanations~\citep{guidotti2022stable,guyomard2023generating}, this paper focuses on the \emph{global} perspective, specifically feature effects like partial dependence plots~\citep{friedman2001greedy} and accumulated local effects~\citep{apley2020visualizing}. We observe a widespread adoption of global feature effect explanations in various scientific domains, including medicine~\citep{petch2022opening}, engineering~\citep{mangalathu2022machine}, microbiology~\citep{robertson2023gut}, and climate~\citep{kemter2023controls}.

Alongside the adoption of post-hoc explanations in scientific practice, work has appeared questioning the quality of explanations regarding stability and faithfulness~\citep{,bodria2023benchmarking} as measured with quantitative evaluation metrics~\citep{hedstrom2023quantus}. 
In fact, the quality of explanations often correlates with model performance~\citep{lakkaraju2020robust,jia2021studying}. 
Alarmingly, local explanations have been shown to be vulnerable to adversarial manipulations that exploit their well-known limitations~\citep{baniecki2024adversarial,noppel2024explainable}, e.g., sampling out-of-distribution~\citep{slack2020fooling,laberge2023fooling} or sensitivity to overparameterized models~\citep{ghorbani2019interpretation,huang2023safari}. In general, the limitations of ML related to \emph{robustness} are often unknown or overlooked in applied research~\citep[see examples listed in the introduction of][]{freiesleben2023beyond}. 

Most recently in \citep{lin2023robustness}, the authors studied the robustness of local feature attributions~\citep{ribeiro2016why,lundberg2017unified} to input and model perturbations. However, the potential limitations of global feature effects related to robustness are currently understudied. In~\citep{baniecki2022fooling}, a data poisoning attack on partial dependence is proposed to manipulate the visualizations and thus change the model's interpretation. In~\citep{gkolemis2023rhale}, a heterogeneity-aware estimator of accumulated local effects is proposed that automatically determines an optimal splitting of feature values to balance bias and variance. A responsible adoption of feature effects in practice requires a more in-depth analysis, which motivates our research question:
\begin{quote}
    \emph{How robust are global feature effect explanations to data perturbations and model perturbations?}
\end{quote}
To this end, we derive theoretical bounds for evaluating the robustness of partial dependence plots and accumulated local effects. 

The main contributions of our work are:
\begin{enumerate}
    \item We analytically quantify the robustness of \emph{global feature effects to data perturbations} (Theorems~\ref{th:pd-data} \& \ref{th:ale-data}). The theoretical bounds we derive give a better intuition about the factors influencing explanations and advance our general understanding of these explanation methods.
    \item We relate the robustness of \emph{global feature effects to model perturbations} to the robustness of \emph{local feature attributions} (Lemma~\ref{lem:pd-model}). Moreover, we extend this result and derive a new bound for accumulated local effects (Theorem~\ref{th:ale-model}).
    \item We perform experiments with real-world datasets concerning data poisoning and model parameter randomization tests to computationally quantify the robustness of global feature effects in practical applications.
\end{enumerate}

\section{Related work}

\subsection{Global feature effect explanations} 

Partial dependence plots were one of the first methods proposed to interpret black-box ML models like gradient boosting machines~\citep{friedman2001greedy}. They provide a simplified view of feature effects without decomposing interaction effects as in the case of functional ANOVA~\citep{hooker2007generalized}. Feature effects have a natural interpretation corresponding to feature importance~\citep{casalicchio2018visualizing}. In~\citep{apley2020visualizing}, authors propose accumulated local effects as an improvement to partial dependence, which corrects the estimation when features in data are correlated. Highly efficient estimation of accumulated local effects is possible when a model is differentiable~\citep{gkolemis2023dale}, e.g., in the case of neural networks. In~\citep{herbinger2022repid}, authors propose regional effect plots to correct explanations when feature interactions are present in the model. Most recently in this line of work, a heterogeneity-aware estimator of accumulated local effects was proposed that improves a naive bin-splitting of features~\citep{gkolemis2023rhale}.

Crucially, feature effect explanations are computed based on a fitted model (or learning algorithm) and the underlying data distribution~\citep{molnar2023relating}. 
In~\citep{baniecki2022fooling}, an adversarial attack on partial dependence that poisons the data is used to manipulate the interpretation of an explanation. In~\citep{muschalik2023ipdp}, the authors propose a method to estimate partial dependence under data shifts that impact a model in the case of incremental learning. Both works can be viewed as specific cases generalized by our theoretical analysis. 

Despite their inherent limitations, feature effect explanations are useful to interpret ML models in applied sciences, e.g., the effect of age and cholesterol on the probability of heart disease~\citep[][figure~6]{petch2022opening}, the effect of wall properties on shear strength~\citep[][figures~5~\&~6]{mangalathu2022machine}, the impact of aridity on flood trends~\citep[][figure~5b]{kemter2023controls}, which motivates our further study on their robustness.

\subsection{Robustness and stability of explanations} 

Robustness is a key concept in ML with often divergent meanings as discussed in~\citep{freiesleben2023beyond}. In~\citep{lakkaraju2020robust}, the authors propose robust global explanations in the form of linear and rule-based explanations (a.k.a. interpretable surrogate models). Recent work on interpretability studies the robustness, also referred to as stability, of local explanation methods such as counterfactuals~\citep{guidotti2022stable,guyomard2023generating} and feature attributions~\citep{gan2022is,mayer2023minimizing,lin2023robustness}. Stability is defined in the literature as \emph{ability to provide similar explanations on similar instances}~\citep{guidotti2022stable,gan2022is} or \emph{obtaining similar explanations after the model is retrained on new data}~\citep{mayer2023minimizing}, which directly refers to the notion of \emph{robustness of explanations to input data and model perturbations}~\citep{lakkaraju2020robust,guyomard2023generating,lin2023robustness}. 

Related to this notion of robustness from the point of safety are works proposing adversarial attacks on explanation methods. 
Explanations can be manipulated by substituting a model~\citep{slack2020fooling}, crafting adversarial examples~\citep{ghorbani2019interpretation,huang2023safari} or poisoning the data~\citep{baniecki2022fooling,laberge2023fooling}. 
Such threats undermine the trustworthiness of ML systems that possibly cannot provide actionable explanations in real-world~\citep{guidotti2022stable,mayer2023minimizing}. In~\citep{wicker2023robust}, authors propose a method to guarantee the adversarial robustness of local gradient-based explanations. 

Our theoretical work directly relates to the robustness results obtained in~\citep{lin2023robustness} but instead aims at global feature effect explanations. 

\section{Notation and definition of feature effects}

We consider a supervised ML setup for regression or binary classification with labeled data $\{(\bx^{(1)},y^{(1)}), \ldots, (\bx^{(n)}, y^{(n)})\}$, where every element comes from $\mX \times \mY$, the underlying feature and label space. 
Usually, we assume $\mX \subseteq \bbR^{p}$.
We denote the $n \times p$ dimensional design matrix by $\bbX$ where $\bx^{(i)}$ is the $i$-th row of $\bbX$.
This data is assumed to be sampled in an i.i.d. fashion from an underlying distribution defined on $\mX \times \mY$.
We denote a random variable vector as $\bX \in \mX$ and the random variable for a label by $Y \in \mY$.
Let $s \subset \{1, \dots, p\}$ be a feature index set of interest with its complement $\sbar = \{1, \dots, p\} \setminus s$.
We often index feature vectors, random variables, and design matrices by index sets $s$ to restrict them to these index sets. 
We write $f(\bxs, \bxsb)$, to emphasize that the feature vector is separated into two parts, usually the ``features of interest'' $\bxs$ and the ```rest'' $\bxsb$.
We use $p(\bx, y)$ for the joint probability density function on $\mX \times \mY$, and
we write $\pXx$ and $\pXsx$ for marginal distributions for $\bX$ and $\bX_s$, respectively, and $\pXsXt(\bx_s | \bx_t)$ for the conditional distribution of  $\bX_s|\bX_t$.

We denote a prediction model by $f: \mX \mapsto \bbR$; it predicts an output using an input feature vector $\bx$.
In the case of binary classification, the output is either a decision score from $\bbR$ or a posterior probability from $[0,1]$. Without loss of generality, we explain the output for a single class in the case of a multi-class task.
Later, we will make changes to the design matrix of our labeled data on which we train a model.
To make this explicit, we sometimes write $f_{\bbX}$ to denote a model trained on $\bbX$ (and we suppress labels in notation here) but we will simply write $f$ when training data is clear from context. 

Furthermore, let $g(\cdot\;;f,\pX)$ denote a general explanation function where we emphasize in notation that the explanation depends both on a given model $f$ and the feature density $\pX$ -- which we will both perturb later. Specifically, let $g_s(\bx_s; f, \pX) $ denote a global feature effect (e.g., a $\textsc{pd}_s$ or $\textsc{ale}_s$ function as defined in Definitions \ref{def:pd} and \ref{def:ale}) for a set of features of interest $s$ -- often $|s| = 1$ when the effect is visualized as a line curve or $|s| = 2$ when it is visualized as a heat map. In practice, an estimator of feature effects denoted by $\gwide_s(\bx_s; f, \bbX)$  requires estimating probability density $\pX$ using particular input data $\bbX$. 

\begin{definition}[Partial dependence \citep{friedman2001greedy}] Partial dependence for feature set $s$ is defined as 
$
    \textsc{pd}_s(\bxs; f, \pX) =  
    \bbE_{\bXsb \sim \pXsb} \left[f(\bxs, \bXsb)\right] = 
    \int f(\bxs, \bxsb) \pXsbx d\bxsb,
$
which in practice can be estimated using Monte-Carlo estimation:
$
    \widehat{\textsc{pd}}_s(\bxs; f, \bbX) = 
    \frac{1}{n} \sum_{i=1}^{n} \left[f(\bxs, \bxsb^{(i)})\right].
$
\label{def:pd}
\end{definition}

\begin{definition}[Conditional dependence, i.e., Marginal plot \citep{friedman2001greedy,apley2020visualizing}] 
%Conditional dependence for model $f$ in a constant (grid point) value $\bxs$ is defined as
Conditional dependence is defined as
$
    \textsc{cd}_s(\bxs; f, \pX) = \bbE_{\bXsb \sim \pXsbXs} \left[f(\bxs, \bXsb)\right] = \int f(\bxs, \bxsb) \pXsbXs(\bxsb | \bxs) d\bxsb,
$
which can be estimated using
$
    \widehat{\textsc{cd}}_s(\bxs; f, \bbX) = \frac{1}{|\mN(\bxs)|} \sum_{i \in \mN(\bxs)} \left[f(\bxs, \bxsb^{(i)})\right], 
$
where $\mN(\bxs) \coloneqq \{i: \| \bxs - \bxs^{(i)} \|  \leq  \epsilon\}$ denotes indices of observations in an $\epsilon$-neighborhood of $\bxs$ for a given norm $\| \cdot \|$.
\end{definition}

\begin{definition}[Accumulated local effects \citep{apley2020visualizing}] For brevity, we define here $\textsc{ale}_s$ for case when $|s|=1$ as
\begin{equation}
\begin{split}
   \textsc{ale}_s(\bxs; f, \pX) 
   & = \int_{\bxsmin}^{\bxs} \bbE_{\bXsb \sim \pXsbXsz}\left[\frac{\partial f(z, \bXsb)}{\partial z}\right] dz \\
   & = \int_{\bxsmin}^{\bxs} \int \left[\frac{\partial f(z, \bxsb)}{\partial z}\right] \pXsbXsz(\bxsb | z) d\bxsb dz,
\end{split}
\end{equation}
where $\bxsmin$ is a value chosen near the lower bound of the support of feature~$s$. $\textsc{ale}_s$ can be estimated using
\begin{equation}
   \widehat{\textsc{ale}}_s(\bxs; f, \bbX) = \sum_{k=1}^{k_{\bxs}} \frac{1}{|\mB(k)|} \sum_{\mB(k)} \left[f(z^k, \bxsb^{(i)}) - f(z^{k-1}, \bxsb^{(i)})\right], 
\end{equation}
where $(z^{1}, \ldots, z^{k_{\bxs}})$ are grid points spanning the domain of feature $s$ up to $\bx_s$ and $\mB(k) \coloneqq \{i:\bxs^{(i)} \in (z^{k-1}, z^k)\}$ are indices of observations in a $k$-th bin.
\label{def:ale}
\end{definition}

Otherwise, $\widehat{\textsc{dale}}_s$~\citep{gkolemis2023dale} is a more efficient estimator of $\textsc{ale}_s$ under the assumption that $f$ is theoretically differentiable, and auto-differentiable in practice, e.g., a neural network. Both $\widehat{\textsc{ale}}_s$ and $\widehat{\textsc{dale}}_s$ are estimated up to a constant (also called an ``uncentered'' estimator), which in practice is corrected by adding to it the mean prediction of the model.

Feature effect explanation is usually estimated on a finite set of grid points $\bz = (z^{1}, \ldots, z^{m})$ spanning the domain of feature $\bx_s$. 
The most popular choices for grid points are quantile values or an equidistant grid. 
Function $\gwide$ then returns an estimated $m$-dimensional explanation vector $\be$ defined on the grid $\bz$, and the curve can be visualized from the finite set of points $\{z^k,\;\gwide_s(z^k \mid f, \bbX)\}_{k=1}^{m}$.

\section{Theoretical analysis}

Let the symbol $\rightarrow$ denote a change in a given object, e.g., a small perturbation in the data. In general, our goal is to quantify the change in \textbf{global} explanations $\be \rightarrow \be'$ in means of data change $\bbX \rightarrow \bbX'$ (e.g., distribution shift) or model change $f \rightarrow f'$ (e.g., fine-tuning) measured with some distance function $\mD$. 
In the case of model-agnostic interpretability, typically both model and data are used as input to the explanation estimator $\gwide(\cdot \;; f_{\bbX},\bbX)$. 
The literature considers the following scenarios for analyzing explanation robustness:
\begin{enumerate}
    \item[\ref{sec:data}] \textbf{data perturbation} when $\bbX \rightarrow \bbX'$ implies $ \gwide(\cdot\;; f_{\bbX},\bbX) \rightarrow \gwide(\cdot\;; f_{\bbX},\bbX')$, also known as data poisoning \citep{baniecki2022fooling} or biased sampling \citep{laberge2023fooling},
    \item[\ref{sec:model}] \textbf{model perturbation} when $f \rightarrow f'$ implies $\gwide(\cdot\;; f, \bbX) \rightarrow \gwide(\cdot\;; f', \bbX)$, which, in practice, often corresponds to either 
    \begin{itemize}
        \item $\bbX \rightarrow \bbX'$ implies $\gwide(\cdot\;; f_{\bbX},\bbX) \rightarrow \gwide(\cdot\;; f_{\bbX'},\bbX)$ in case of data shifts~\citep{mayer2023minimizing}, or
        \item $\bbX \rightarrow \bbX'$ implies $\gwide(\cdot\;; f_{\bbX},\bbX) \rightarrow \gwide(\cdot\;; f_{\bbX'},\bbX')$ in incremental learning~\citep{muschalik2023ipdp}.
    \end{itemize}
\end{enumerate}
Therefore, quantifying \emph{robustness} can be defined as quantifying bounded relationships between explanation change $\mD(\be,\be')$ and data change $\mD(\bbX,\bbX')$ or model change $\mD(f,f')$. We do exactly that for global feature effects.

\subsection{Robustness to data perturbation}\label{sec:data}

Consider a simple scenario where the model function is the XOR function of two features
$f(\bx_1, \bx_2) = \mathbbm{1}_{\bx_1\cdot \bx_2 > 0}.$ We want to explain an effect of feature $\bx_1$ so taking a partial dependence of $f$ on $\bx_1$ yields
\begin{equation}
\begin{split}
\textsc{pd}_1(\bx_1 ; f, \pX) & = \bbE_{\bX_2 \sim p_{\bX_2}} \left[f(\bx_1, \bX_2)\right] 
= \bbE_{\bX_2 \sim p_{\bX_2}} \left[\mathbbm{1}_{\bx_1\cdot \bx_2 > 0}\right] \\
& = \bbP(\bx_1\cdot \bX_2 > 0) =
\begin{cases}
\bbP(\bX_2>0), & \text{if } \bx_1 > 0\\
\bbP(\bX_2<0), & \text{otherwise}.
\end{cases}
\end{split}
\end{equation}
We observe that an explanation of $\bx_1$ depends solely on the distribution~$p_{\bX_2}$. 

For example, assuming the distribution of the second feature is given by $\bX_2 \sim \mathcal{U}[a,b] \text{ where } a\leq0\leq b$, we have
\begin{equation}
\textsc{pd}_1(\bx_1 ; f, \pX) =
\begin{cases}
\bbP(\bX_2>0) = \frac{b}{b-a}, & \text{if } \bx_1 > 0\\
\bbP(\bX_2<0) = \frac{a}{b-a}, & \text{otherwise}.
\end{cases}
\end{equation}
Figure~\ref{fig:simple} shows this relationship between a feature effect in grid point $\bx_1 = 1$ and perturbing distribution $p_{\bX_2}$ computationally; also for a normal distribution. 

\begin{figure}[t]
    \centering
    \includegraphics[width=0.49\textwidth]{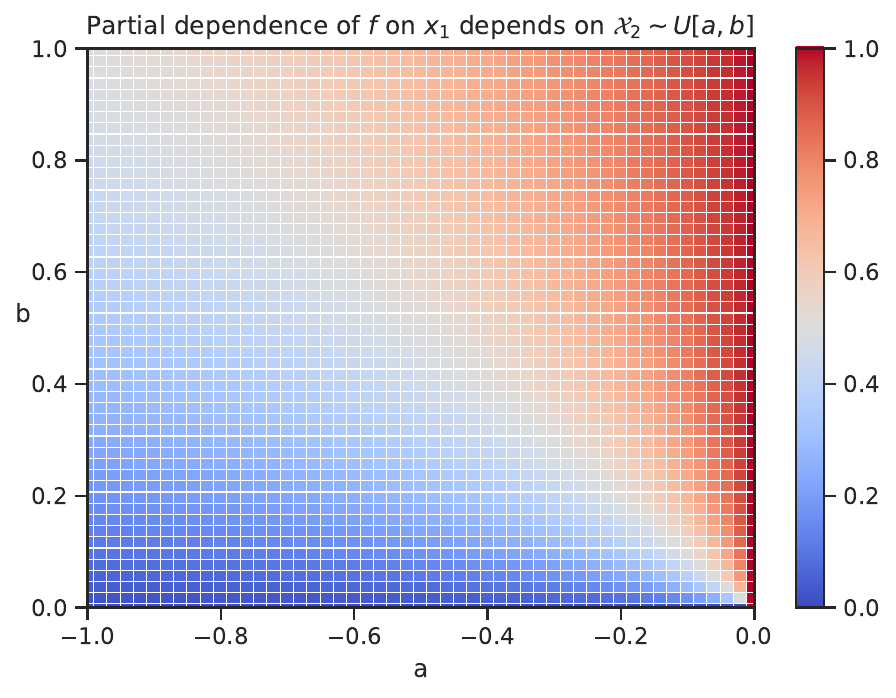}
    \includegraphics[width=0.49\textwidth]{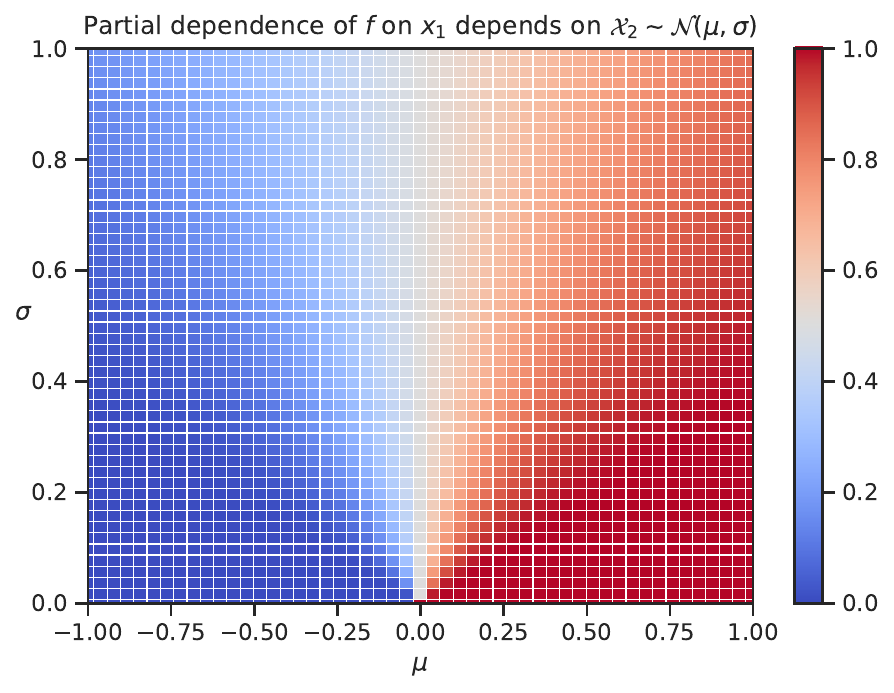}
    \caption{Global feature effect explanation (in color) for value $\bx_1 = 1$ depends \underline{solely} on the parameters of a uniform (\textbf{left}) or normal (\textbf{right}) distribution of feature $\bX_2$.}
    \label{fig:simple}
\end{figure}

We are interested in finding a theoretical bound for this relationship in a general case.

\begin{assumption}\label{ass:bound}
We assume that the model $f$ has bounded predictions, i.e., there exists a constant $B$ such that $|f(\bx)| \leq B$ for all $\bx \in \bbR^p$.
\end{assumption}

\begin{theorem} \label{th:pd-data}
The robustness of partial dependence and conditional dependence to \textbf{data perturbations} is given by the following formulas
\begin{equation}
    \big|\textsc{pd}_s(\bxs; f, \pX) - \textsc{pd}_s(\bxs; f, \ppX)\big| \leq  2B \cdot \dtv\big(\pXsb, \ppXsb\big),
\end{equation}
\begin{equation}
    \big|\textsc{cd}_s(\bxs; f, \pX) - \textsc{cd}_s(\bxs; f, \ppX)\big|
    \leq 2B \cdot \dtv\big(\pXsbXs, \ppXsbXs\big),
\end{equation}
where the total variation distance $\dtv$ is defined via the $l_1$ functional distance.
\end{theorem}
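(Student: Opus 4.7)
The plan is to reduce each inequality to the standard Scheffé-type identity $\int |p-q|\,dx = 2\dtv(p,q)$ via a direct computation on the integral representations of $\textsc{pd}_s$ and $\textsc{cd}_s$. Both bounds share the same template, so I would prove the partial dependence case in detail and then sketch the conditional dependence case by simply swapping marginals for conditionals.

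For the partial dependence bound, I would start from Definition~\ref{def:pd} and write the two explanations over a common integrator of $\bxsb$, giving
\begin{equation*}
\textsc{pd}_s(\bxs; f, \pX) - \textsc{pd}_s(\bxs; f, \ppX) = \int f(\bxs, \bxsb)\bigl[\pXsbx - \ppXsbx\bigr]\,d\bxsb.
\end{equation*}
Taking absolute values and pulling them inside the integral (triangle inequality for integrals) yields
\begin{equation*}
\bigl|\textsc{pd}_s(\bxs; f, \pX) - \textsc{pd}_s(\bxs; f, \ppX)\bigr| \leq \int \bigl|f(\bxs, \bxsb)\bigr| \cdot \bigl|\pXsbx - \ppXsbx\bigr|\,d\bxsb.
\end{equation*}
Next I would apply Assumption~\ref{ass:bound} to replace $|f(\bxs,\bxsb)|$ by the uniform bound $B$, and finally invoke the defining identity of total variation distance between two densities, namely $\int |\pXsb - \ppXsb|\,d\bxsb = 2\,\dtv(\pXsb, \ppXsb)$, which is exactly the $\ell_1$ functional distance form cited in the statement. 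Chaining these steps gives the claimed $2B\cdot\dtv(\pXsb,\ppXsb)$ upper bound.

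For the conditional dependence bound, the argument is structurally identical: I write the difference $\textsc{cd}_s(\bxs; f, \pX) - \textsc{cd}_s(\bxs; f, \ppX)$ as a single integral against the difference of the conditional densities $\pXsbXs(\bxsb\mid\bxs) - \ppXsbXs(\bxsb\mid\bxs)$ at the fixed conditioning value $\bxs$, then apply the triangle inequality, use $|f|\leq B$, and recognize the remaining $\ell_1$ integral as $2\,\dtv(\pXsbXs,\ppXsbXs)$ (interpreted pointwise at $\bxs$, as indicated by the notation in the statement).

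I do not expect any genuine obstacle here: the proof is essentially a one-line Scheffé bound. The only points that require a touch of care are (i) being explicit that Assumption~\ref{ass:bound} applies to the model evaluated on any argument, so the bound $B$ transfers to $f(\bxs,\cdot)$, and (ii) matching the paper's convention that $\dtv$ is defined through the $\ell_1$ functional distance, which fixes the factor of $2$ in the final inequalities (had the authors defined $\dtv$ without the $\tfrac{1}{2}$, the factor would collapse to $B$ instead of $2B$).
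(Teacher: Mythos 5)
Your proposal is correct and follows essentially the same route as the paper's own proof: write the difference as a single integral against $\pXsbx - \ppXsbx$, pull the absolute value inside, bound $|f|$ by $B$ via Assumption~\ref{ass:bound}, and identify the remaining $\ell_1$ integral as $2\,\dtv$, then repeat verbatim with the conditional density for $\textsc{cd}_s$. No gaps; your remark about the factor of $2$ depending on the convention for $\dtv$ matches the paper's stated convention.
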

\begin{proof} We have
{\allowdisplaybreaks
\begin{align*}
    & \big|\textsc{pd}_s(\bx_s; f, \pX) - \textsc{pd}_s(\bx_s; f, \ppX)\big| = \\
    & = \left|\int f(\bxs, \bxsb) \pXsbx d\bxsb - \int f(\bxs, \bxsb) \ppXsbx d\bxsb\right| \\
    & = \left|\int f(\bxs, \bxsb) \big(\pXsbx - \ppXsbx\big) d\bxsb \right| \\
    & \leq \int \big| f(\bxs, \bxsb) \big(\pXsbx - \ppXsbx\big) \big| d\bxsb \\
    & = \int \big|f(\bxs, \bxsb)\big| \cdot \left|\pXsbx - \ppXsbx\right| d\bxsb \\
    & \text{from Assumption~\ref{ass:bound}, we have} \\
    & \leq \int B \cdot \left|\pXsbx - \ppXsbx\right| d\bxsb \\ 
    & = B \cdot \int \left|\pXsbx - \ppXsbx\right|\; d\bxsb \\ 
    & = 2B \cdot \dtv\big(\pXsb, \ppXsb\big).
\end{align*}
}
We now apply this argument again, with expected value $\bbE_{\bXsb \sim \pXsbXs}$ instead of $\bbE_{\bXsb \sim \pXsb}$, to obtain
\begin{equation}
    \big|\textsc{cd}_s(\bxs; f, \pX) - \textsc{cd}_s(\bxs; f, \ppX)\big|
    \leq 2B \cdot \dtv\big(\pXsbXs, \ppXsbXs\big).
\end{equation}
\qed
\end{proof}

Theorem~\ref{th:pd-data} gives an upper bound on the possible change of global feature effects in terms of distance between data distributions given model-specific constant $B$, e.g. 1 in classification or a maximum value of target domain in regression. Certainly in many scenarios, an average value of model prediction $\big|f(\bxs, \bxsb)\big|$, and hence the bound, is smaller. 

\begin{remark}
    In Theorem~\ref{th:pd-data}, we can obtain a tighter bound per point $\bxs$ by taking $B(\bxs)$ such that  $|f(\bxs, \bxsb)| \leq B(\bxs)$ for all $\bxsb \in \bbR^{p-|s|}$.
\end{remark}

\begin{remark}\label{remark:constant}
    By definition, if $f$ has bounded predictions such that $A \leq f(\bx) \leq B$, then the feature effect value is bounded, i.e., $A \leq g_s(\bxs; f, \pX) \leq B$. From this follows that a change in the feature effect value will be smaller than the maximal distance to these bounds ($A$ or $B$). We obtain
    $$
    \big|g_s(\bxs; f, \pX) - g_s(\bxs; f, \ppX)\big| \leq \max(|g_s(\bxs; f, \pX) - B|, |g_s(\bxs; f, \pX) - A|),
    $$ 
    which makes the bound constant for high-enough values of $\dtv\big(\pX, \ppX\big)$. For example, taking
    $A = 0$ and $B = 1$, for $\bxs$ such that $g_s(\bxs; f, \pX) = 0.5$, we have 
    \begin{equation}
    \big|g_s(\bxs; f, \pX) - g_s(\bxs; f, \ppX)\big| \leq
    \begin{cases}
    2 \cdot \dtv\big(\pX, \ppX\big), & \text{if } \dtv\big(\pX, \ppX\big) \leq 0.25 ,\\
    0.5, & \text{otherwise}.
    \end{cases}
    \end{equation}
\end{remark}

We conduct analogous theoretical analysis for accumulated local effects based on the following assumption about the model $f$, which holds for many predictive functions, e.g., typical neural networks~\citep{virmaux2018lipschitz}. 
\begin{assumption}\label{ass:lipschitz}
We assume that the model $f$ is globally $L$-Lipschitz continuous, or that we have $\|f(\bx) - f(\bx')\| \leq L \cdot \|\bx - \bx'\|$ for all $\bx, \bx' \in \bbR^p$.
\end{assumption}
It can be easily shown that Assumption~\ref{ass:lipschitz} leads to Lemma~\ref{lem:lipschitz}.
\begin{lemma}\label{lem:lipschitz}
If $f$ is $L$-Lipschitz, then it holds that $\|\nabla f\| \leq L$ almost everywhere. 
\end{lemma}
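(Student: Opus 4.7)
The plan is to combine Rademacher's theorem, which guarantees that a Lipschitz function on $\bbR^p$ is differentiable almost everywhere, with a direct bound on the directional derivatives obtained from the Lipschitz inequality. Because the statement is only claimed to hold almost everywhere, we only need to control $\|\nabla f(\bx)\|$ at those points $\bx$ where the gradient actually exists.

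First I would fix any point $\bx$ at which $f$ is differentiable (such points form a set of full Lebesgue measure by Rademacher's theorem, since $f$ is globally $L$-Lipschitz on $\bbR^p$ by Assumption~\ref{ass:lipschitz}). For an arbitrary unit vector $\bv \in \bbR^p$, I would write the directional derivative as
\begin{equation*}
\nabla f(\bx) \cdot \bv \;=\; \lim_{h \to 0} \frac{f(\bx + h\bv) - f(\bx)}{h},
\end{equation*}
and apply the Lipschitz bound to the difference quotient: $\bigl|f(\bx + h\bv) - f(\bx)\bigr| \leq L \cdot \|h\bv\| = L|h|$, so $\bigl|\tfrac{1}{h}\bigl(f(\bx + h\bv) - f(\bx)\bigr)\bigr| \leq L$. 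Passing to the limit yields $|\nabla f(\bx) \cdot \bv| \leq L$ for every unit vector $\bv$.

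Finally, at any such $\bx$ where $\nabla f(\bx) \neq 0$, I would choose the specific unit vector $\bv = \nabla f(\bx) / \|\nabla f(\bx)\|$, for which $\nabla f(\bx) \cdot \bv = \|\nabla f(\bx)\|$; combining with the previous inequality gives $\|\nabla f(\bx)\| \leq L$. (If $\nabla f(\bx) = 0$ the bound is trivial.) Since this holds at every point of differentiability, and those points have full measure, we conclude $\|\nabla f\| \leq L$ almost everywhere.

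The main obstacle is simply the appeal to Rademacher's theorem to secure differentiability a.e.; everything else is a one-line Lipschitz estimate on the difference quotient. One could alternatively assume $f$ is continuously differentiable (as is typical for the neural networks cited after Assumption~\ref{ass:lipschitz}) and skip Rademacher entirely, obtaining the bound pointwise rather than a.e.
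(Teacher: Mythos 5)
Your proof is correct: the paper itself offers no argument for this lemma (it simply says the implication "can be easily shown"), and your combination of Rademacher's theorem with the difference-quotient bound on directional derivatives is exactly the standard argument being implicitly invoked. No gaps.
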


\begin{theorem} \label{th:ale-data}
The robustness of accumulated local effects to \textbf{data perturbations} is given by the following formula
\begin{equation}
    \big|\textsc{ale}_s(\bxs; f, \pX) - \textsc{ale}_s(\bxs; f, \ppX)\big|
    \leq 2L \cdot (\bxs - \bxsmin) \cdot \dtv\big(\pXsbXszmax, \ppXsbXszmax\big),
\end{equation}
where $z^{*} = \argmax_z \dtv\big(\pXsbXsz, \ppXsbXsz\big)$.
\end{theorem}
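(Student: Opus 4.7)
The plan is to mirror the structure of the proof of Theorem~\ref{th:pd-data}, but carry out the estimation pointwise in the outer integration variable $z$ and then handle the $z$-integral separately. Concretely, I would first write out both $\textsc{ale}_s$ terms using the double-integral form of Definition~\ref{def:ale} and combine them into a single integral whose integrand is
\[
\frac{\partial f(z, \bxsb)}{\partial z}\bigl(\pXsbXsz(\bxsb|z) - \ppXsbXsz(\bxsb|z)\bigr).
\]
Taking absolute values, applying the triangle inequality to pull $|\cdot|$ inside both integrals, and factoring the integrand yields a product of $\bigl|\partial f/\partial z\bigr|$ with $\bigl|\pXsbXsz - \ppXsbXsz\bigr|$.

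Next I would replace the partial derivative bound. Assumption~\ref{ass:lipschitz} together with Lemma~\ref{lem:lipschitz} gives $\|\nabla f\| \leq L$ almost everywhere, and in particular $|\partial f(z,\bxsb)/\partial z| \leq L$. Pulling this out of the inner integral leaves
\[
L \int_{\bxsmin}^{\bxs} \int \bigl|\pXsbXsz(\bxsb|z) - \ppXsbXsz(\bxsb|z)\bigr|\, d\bxsb\, dz,
\]
and recognising the inner $l_1$ integral as $2\,\dtv\bigl(\pXsbXsz, \ppXsbXsz\bigr)$ collapses it to
\[
2L \int_{\bxsmin}^{\bxs} \dtv\bigl(\pXsbXsz, \ppXsbXsz\bigr)\, dz.
\]

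The final step is the outer $z$-integral. Since $z^{*} = \argmax_z \dtv(\pXsbXsz, \ppXsbXsz)$ is by definition a maximiser of the integrand, the integrand is bounded above uniformly in $z$ by $\dtv(\pXsbXszmax, \ppXsbXszmax)$; integrating this constant over $[\bxsmin, \bxs]$ produces the factor $(\bxs - \bxsmin)$ and yields the claimed bound.

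The only step that requires some care is justifying the bound $|\partial f/\partial z| \leq L$ at the points where $z$ is being used as an integration variable along feature $s$, but this is exactly what Lemma~\ref{lem:lipschitz} provides almost everywhere, and the measure-zero exceptional set is absorbed in the integration. The rest of the argument is a direct adaptation of the chain of inequalities used in the proof of Theorem~\ref{th:pd-data}, simply applied inside the outer integral over $z$ before a final supremum step to extract the $(\bxs - \bxsmin)$ factor.
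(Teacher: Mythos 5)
Your proposal is correct and follows essentially the same chain of inequalities as the paper's own proof: combining the two double integrals, applying the triangle inequality, bounding $\bigl|\partial f/\partial z\bigr|$ by $L$ via Assumption~\ref{ass:lipschitz} and Lemma~\ref{lem:lipschitz}, identifying the inner $l_1$ integral as $2\,\dtv$, and finally bounding the outer integral by its supremum times $(\bxs - \bxsmin)$. No meaningful differences to report.
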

\begin{proof} We have
\begin{equation}
\begin{split}
    & \big|\textsc{ale}_s(\bxs; f, \pX) - \textsc{ale}_s(\bxs; f, \ppX)\big| = \\
    & = \Bigg| \int_{\bxsmin}^{\bxs} \int \frac{\partial f(z, \bxsb)}{\partial z}  \pXsbXsz(\bxsb | z) d\bxsb dz \\
    & - \int_{\bxsmin}^{\bxs} \int \frac{\partial f(z, \bxsb)}{\partial z} \ppXsbXsz(\bxsb | z) d\bxsb dz \Bigg| \\
    & = \left| \int_{\bxsmin}^{\bxs} \int \frac{\partial f(z, \bxsb)}{\partial z}  \big(\pXsbXsz(\bxsb | z) - \ppXsbXsz(\bxsb | z)\big) d\bxsb dz \right| \\
    & \leq \int_{\bxsmin}^{\bxs} \int \Big| \frac{\partial f(z, \bxsb)}{\partial z}  \big(\pXsbXsz(\bxsb | z) - \ppXsbXsz(\bxsb | z)\big)\Big| d\bxsb dz \\
    & = \int_{\bxsmin}^{\bxs} \int \Big| \frac{\partial f(z, \bxsb)}{\partial z} \Big|  \cdot \Big|\pXsbXsz(\bxsb | z) - \ppXsbXsz(\bxsb | z)\Big| d\bxsb dz \\
    & \text{from Assumption~\ref{ass:lipschitz} and Lemma~\ref{lem:lipschitz}, we have} \\
    & \leq \int_{\bxsmin}^{\bxs} L \int \left|\pXsbXsz(\bxsb | z) - \ppXsbXsz(\bxsb | z)\right| d\bxsb dz \\
    & = \int_{\bxsmin}^{\bxs} 2L \cdot \dtv\big(\pXsbXsz, \ppXsbXsz\big) dz \\
    & \leq 2L \cdot (\bxs - \bxsmin) \cdot \max_z \dtv\big(\pXsbXsz, \ppXsbXsz\big).
\end{split}
\end{equation}
\qed
\end{proof}

Theorem~\ref{th:ale-data} is for case when $|s| = 1$, but it can be generalized to $|s| > 1$. The robustness of accumulated local effects differs from that of partial and conditional dependence as $\textsc{ale}$ uses a gradient of a function bounded by $L$ instead of the model's prediction bounded by $B$. In general, estimating $L$ is not obvious, but in many cases it can be found computationally~\citep[see a discussion in][]{virmaux2018lipschitz}.

\begin{remark}
    Interestingly, the ``accumulated'' nature of the method makes the bound $2L \cdot (\bxs - \bxsmin) \cdot \dtv\big(\pXsbXszmax, \ppXsbXszmax\big)$ an increasing function of $\bxs$, while the corresponding bound of $\textsc{pd}_s$ and $\textsc{cd}_s$ does not posses such property.
\end{remark}

We further support the derived theory concerning the robustness of global feature effects to data perturbations with experimental results in Section~\ref{sec:experiments}.

\subsection{Robustness to model perturbation}
\label{sec:model}

In this section, we first relate the robustness of global feature effects to the robustness of local removal-based feature attributions discussed in~\citep{lin2023robustness} (Lemma~\ref{lem:pd-model}). Theorem~\ref{th:ale-model} extends these results to accumulated local effects. 
 
\begin{lemma}\label{lem:pd-model} 
The robustness of partial dependence and conditional dependence to \textbf{model perturbations} is given by the following formulas
\begin{equation}
    \big|\textsc{pd}_s(\bxs; f, \pX) - \textsc{pd}_s(\bxs; f', \pX)\big| 
    \leq \|f-f'\|_{\infty},
\end{equation}
\begin{equation}
    \big|\textsc{cd}_s(\bxs; f, \pX) - \textsc{cd}_s(\bxs; f', \pX)\big| 
    \leq \|f-f'\|_{\infty, \mX},
\end{equation}
where $\| f \|_\infty \coloneqq \sup_{\bx \in \bbR^p} |f(\bx)| $ denotes an infinity norm for a function and $\| f \|_{\infty, \mX} \coloneqq \sup_{\bx \in \mX} |f(\bx)|$ is the same norm taken over the domain $\mX \subseteq \bbR^p$.
\end{lemma}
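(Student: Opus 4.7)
The plan is to prove both inequalities by a direct calculation that parallels the proof of Theorem~\ref{th:pd-data}, except that now the difference between the two explanations comes from varying the model rather than the density. The core idea is that since partial dependence and conditional dependence are both expectations of $f$ against a probability density, replacing $f$ by $f'$ incurs at most the supremum deviation $\|f-f'\|_\infty$ pointwise, and averaging against a probability density preserves this bound because the density integrates to one.

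Concretely, for the partial dependence bound I would first expand the difference using Definition~\ref{def:pd} and combine the two integrals:
\begin{equation*}
\textsc{pd}_s(\bxs; f, \pX) - \textsc{pd}_s(\bxs; f', \pX) = \int \bigl(f(\bxs,\bxsb) - f'(\bxs,\bxsb)\bigr) \pXsbx \, d\bxsb.
\end{equation*}
Taking absolute values and pushing them inside the integral via the triangle inequality yields $\int |f(\bxs,\bxsb) - f'(\bxs,\bxsb)| \, \pXsbx \, d\bxsb$. Since $|f(\bxs,\bxsb) - f'(\bxs,\bxsb)| \le \|f-f'\|_\infty$ for every $\bxsb \in \bbR^{p-|s|}$, I can pull this constant out and use $\int \pXsbx \, d\bxsb = 1$, giving the bound $\|f-f'\|_\infty$.

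For the conditional dependence bound, the calculation is identical after replacing the marginal density $\pXsb$ with the conditional density $\pXsbXs(\cdot\mid\bxs)$. The one subtlety worth pointing out is the choice of norm: the conditional density $\pXsbXs(\cdot\mid\bxs)$ is supported on $\{\bxsb : (\bxs,\bxsb)\in\mX\}$, so it is enough to bound $|f-f'|$ pointwise on $\mX$, which is exactly $\|f-f'\|_{\infty,\mX}$. By contrast, the marginal $\pXsb$ may place mass on values $\bxsb$ for which $(\bxs,\bxsb)$ lies outside $\mX$ (this is precisely the extrapolation issue that motivates ALE), and so the unrestricted $\|\cdot\|_\infty$ is needed in the PD bound.

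There is no genuine obstacle here; the proof amounts to the triangle inequality plus the fact that a probability density integrates to one, and no assumption on $f$ (boundedness, Lipschitzness) is even required beyond $\|f-f'\|_\infty$ being finite. The only care needed is in consistently tracking the domain of the supremum norm between the two cases.
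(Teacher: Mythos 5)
Your proof is correct. Note, though, that the paper does not write out an argument at all here: its ``proof'' of Lemma~\ref{lem:pd-model} is a one-line citation to lemmas~5 and~6 of \citep{lin2023robustness}, which establish exactly these bounds for removal-based attributions built on marginal and conditional expectations. What you have done is reconstruct the underlying argument from scratch: combine the two integrals, apply the triangle inequality, bound $|f-f'|$ pointwise by the appropriate supremum norm, and use that the density integrates to one. That is the standard and essentially unique route, and it mirrors the structure the paper itself uses for Theorem~\ref{th:pd-data} (there the roles are reversed: the model is fixed and bounded by $B$ while the density varies, producing the total-variation term). A genuine added value of your write-up is the explicit justification for why the two inequalities carry different norms: the conditional density $\pXsbXs(\cdot\mid\bxs)$ is supported inside $\mX$, so $\|f-f'\|_{\infty,\mX}$ suffices for $\textsc{cd}_s$, whereas the product $(\bxs,\bxsb)$ with $\bxsb\sim\pXsb$ can fall outside $\mX$, forcing the unrestricted supremum over $\bbR^p$ for $\textsc{pd}_s$. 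The paper leaves this point implicit in the citation, so your version is the more self-contained and informative one.
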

\begin{proof}
    Follows directly from \citep[][lemmas 5 \& 6]{lin2023robustness}.
\end{proof}

\begin{theorem} \label{th:ale-model} 
The robustness of accumulated local effects to \textbf{model perturbations} is given by the following formula
\begin{equation}
    \big|\textsc{ale}_s(\bxs; f, \pX) - \textsc{ale}_s(\bxs; f', \pX)\big| \leq (\bxs - \bxsmin) \cdot \|h - h'\|_{\infty, \mX},
\end{equation}
where $h \coloneqq \frac{\partial f}{\partial \bxs}$ and $h' \coloneqq \frac{\partial f'}{\partial \bxs}$ denote partial derivatives of $f$ and $f'$ respectively.
\end{theorem}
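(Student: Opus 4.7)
The plan is to mimic the structure of the proof of Theorem~\ref{th:ale-data}, but this time holding the distribution fixed and isolating the difference in the model's partial derivatives rather than the difference in densities. Concretely, after expanding both $\textsc{ale}_s$ terms using the integral form from Definition~\ref{def:ale}, I would combine them into a single double integral over $z \in [\bxsmin, \bxs]$ and $\bxsb$, with integrand
\[
\left( \frac{\partial f(z,\bxsb)}{\partial z} - \frac{\partial f'(z,\bxsb)}{\partial z} \right) p_{\bXsb \mid \bXs = z}(\bxsb \mid z).
\]
The key algebraic move is that the density $\pXsbXsz$ is the same on both sides, so subtraction cancels it out of one factor and leaves it multiplying the difference of derivatives.

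Next, I would apply the triangle inequality to bring the absolute value inside both integrals, factoring as
\[
\left| \frac{\partial f(z,\bxsb)}{\partial z} - \frac{\partial f'(z,\bxsb)}{\partial z} \right| \cdot \pXsbXsz(\bxsb \mid z).
\]
Using the definitions $h = \partial f / \partial \bxs$ and $h' = \partial f'/\partial \bxs$, the first factor is pointwise at most $\|h - h'\|_{\infty, \mX}$, a constant in both $z$ and $\bxsb$. Pulling this constant out of the double integral, the inner integral over $\bxsb$ reduces to $\int \pXsbXsz(\bxsb \mid z)\, d\bxsb = 1$ since it is a conditional density. The outer integral over $z$ from $\bxsmin$ to $\bxs$ then contributes a factor of $(\bxs - \bxsmin)$, giving the claimed bound.

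I do not expect any genuine obstacle here; the only subtlety worth noting is that $\|h - h'\|_{\infty, \mX}$ must be taken over the domain $\mX$ (matching the conditional dependence case of Lemma~\ref{lem:pd-model}) rather than all of $\bbR^p$, which is consistent because the density $\pXsbXsz$ is supported on $\mX_{\sbar}$. There is also an implicit regularity assumption that both $f$ and $f'$ are differentiable with respect to $\bxs$ so that $h, h'$ are well-defined almost everywhere, and that Fubini's theorem applies to swap the order of the $z$ and $\bxsb$ integrals; these are inherited from the setting of Definition~\ref{def:ale} and pose no additional difficulty.
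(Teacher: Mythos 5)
Your proposal is correct and follows essentially the same route as the paper's proof: the paper also combines the two ALE integrals, moves the absolute value inside, and then obtains the factor $\|h-h'\|_{\infty,\mX}$ by noting that the inner integral against the conditional density is exactly a conditional-dependence explanation of $h$ versus $h'$, invoking Lemma~\ref{lem:pd-model} where you instead bound the integrand pointwise and integrate the density to one --- the same calculation in different packaging. The only content the paper adds beyond your argument is an alternative, looser bound $(\bxs-\bxsmin)(L+L')$ via Lipschitz constants, included only to remark that the stated bound is tighter.
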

\begin{proof} We have
{\allowdisplaybreaks
\begin{align*}
    & \big|\textsc{ale}_s(\bxs; f, \pX) - \textsc{ale}_s(\bxs; f', \pX)\big| = \\
    & = \Bigg| \int_{\bxsmin}^{\bxs} \int \frac{\partial f(z, \bxsb)}{\partial z}  \pXsbXsz(\bxsb | z) d\bxsb dz \\
    & - \int_{\bxsmin}^{\bxs} \int \frac{\partial f'(z, \bxsb)}{\partial z} \pXsbXsz(\bxsb | z) d\bxsb dz \Bigg| \\
    & = \left| \int_{\bxsmin}^{\bxs} \int \left(\frac{\partial f(z, \bxsb)}{\partial z} - \frac{\partial f'(z, \bxsb)}{\partial z}\right) \pXsbXsz(\bxsb | z) d\bxsb dz \right| \\
    & \leq \int_{\bxsmin}^{\bxs} \int \Big|\frac{\partial f(z, \bxsb)}{\partial z} - \frac{\partial f'(z, \bxsb)}{\partial z}\Big| \pXsbXsz(\bxsb | z) d\bxsb dz = (\star) \\
    & \text{We can derive two bounds:} \\
    & \text{(\textbf{A}) assuming $f'$ is globally $L'$-Lipschitz continuous, we have} \\
    & (\star) \leq (L + L') \cdot \int_{\bxsmin}^{\bxs} \int \pXsbXsz(\bxsb | z) d\bxsb dz \\
    & = (L + L') \cdot \int_{\bxsmin}^{\bxs} 1 dz = (\bxs - \bxsmin) \cdot (L + L') \\
    & \text{(\textbf{B}) substituting $h(z, \bxsb) \coloneqq \frac{\partial f(z, \bxsb)}{\partial z}$ and $h'(z, \bxsb) \coloneqq \frac{\partial f'(z, \bxsb)}{\partial z}$, we have} \\
    & (\star) = \int_{\bxsmin}^{\bxs} \int \big|h(z, \bxsb) - h'(z, \bxsb)\big| \pXsbXsz(\bxsb | z) d\bxsb dz \\
    & \text{from \textbf{Lemma \ref{lem:pd-model}}, we use $\big|\textsc{cd}_s(z; h, \pX) - \textsc{cd}_s(z; h', \pX)\big| \leq \|h - h'\|_{\infty, \mX}$ to obtain} \\
    & \leq \int_{\bxsmin}^{\bxs} \|h - h'\|_{\infty, \mX}\;dz  = (\bxs - \bxsmin) \cdot \|h - h'\|_{\infty, \mX}.
\end{align*}
}
We observe that the bound obtained in (\textbf{A}) is a specific worst-case scenario of the bound obtained in (\textbf{B}). From Lemma~\ref{lem:lipschitz}, it can be easily shown that $\|h - h'\|_{\infty, \mX} \leq (L + L')$ and so (\textbf{B}) is a tighter bound.
\qed
\end{proof}

Theorem~\ref{th:ale-model} is for case when $|s| = 1$, but it can be generalized to $|s| > 1$. The robustness of accumulated local effects to model perturbation differs from that of partial and conditional dependence as $\textsc{ale}$ is bounded by the norm between partial derivative functions $h$ instead of the model functions $f$.

\section{Experiments}\label{sec:experiments}

We provide additional empirical results supporting our theoretical analysis concerning the robustness of feature effects to data perturbation (Section~\ref{sec:experiments-data}) and model perturbation (Section~\ref{sec:experiments-model}). We rely on data and pretrained models from the OpenXAI benchmark~\citep{agarwal2022openxai} to make our experiments reproducible and to minimize bias related to the choice of a particular ML algorithm or a dataset preprocessing pipeline. Code to reproduce our experiments is available on GitHub at \url{https://github.com/hbaniecki/robust-feature-effects}.

\subsection{Robustness to data perturbation}\label{sec:experiments-data}

First, we aim to computationally analyze the relationship of how changes in the input data $\mD(\bbX,\bbX')$ affect changes in the resulting explanations $\mD(\be,\be')$. 

\paragraph{Setup.} To this end, we rely on the three datasets from OpenXAI that contain only continuous features: HELOC ($n=9871$, $p=23$) where the task is to predict whether a credit will be repaid in 2 years, Pima ($n=768$, $p=9$) where the task is to predict whether a patient has diabetes, and a Synthetic dataset (aka Gaussian, $n=5000$, $p=20$). We leave considerations concerning the perturbation of categorical features for future work. To each dataset, there is a pretrained neural network with an accuracy of 74\%~(HELOC), 92\%~(Synthetic), and 77\%~(Pima) that outperforms a logistic regression baseline~(72\%, 83\%, 66\%, respectively). We explain a neural network on the test sets of HELOC and Synthetic based on the pre-defined splits, but on the train set of Pima, as its test set is too small to reliably estimate conditional distributions and effects. 

To analyze a wide spectrum of possible feature effect explanations, we explain three features $s$: the least, ``median'' and most important to the model. We measure the importance of features with the variance of feature effects, i.e., higher variance means higher importance~\citep{zien2009feature,br2018simple}. For each feature, we evaluate $\bxs$ values on a grid of the three quantile values: 0.2, 0.5, and 0.8.

\paragraph{Data perturbation.} We perturb data using two approaches: a baseline of applying Gaussian noise with varying intensity $\mN(0,\sigma)$, and an adversarial perturbation found using a genetic algorithm proposed in~\citep{baniecki2022fooling}. The latter perturbs a dataset $\bbX$ used for estimating $\pX$ so that the change in an explanation value $|g_s(\bxs;f,\pX) - g_s(\bxs;f,\ppX)|$ is maximized for each $\bxs$ separately. Further details concerning methods and their hyperparameters are in Appendix~\ref{app:experiments-data}. In each scenario, we measure the magnitude of perturbation by estimating total variation distance $\dtv(\pX, \ppX)$ for partial dependence and $\dtv\big(\pXsbXs, \ppXsbXs\big)$ for conditional dependence.

\begin{figure}[b!]
    \centering
    \includegraphics[width=0.9\textwidth]{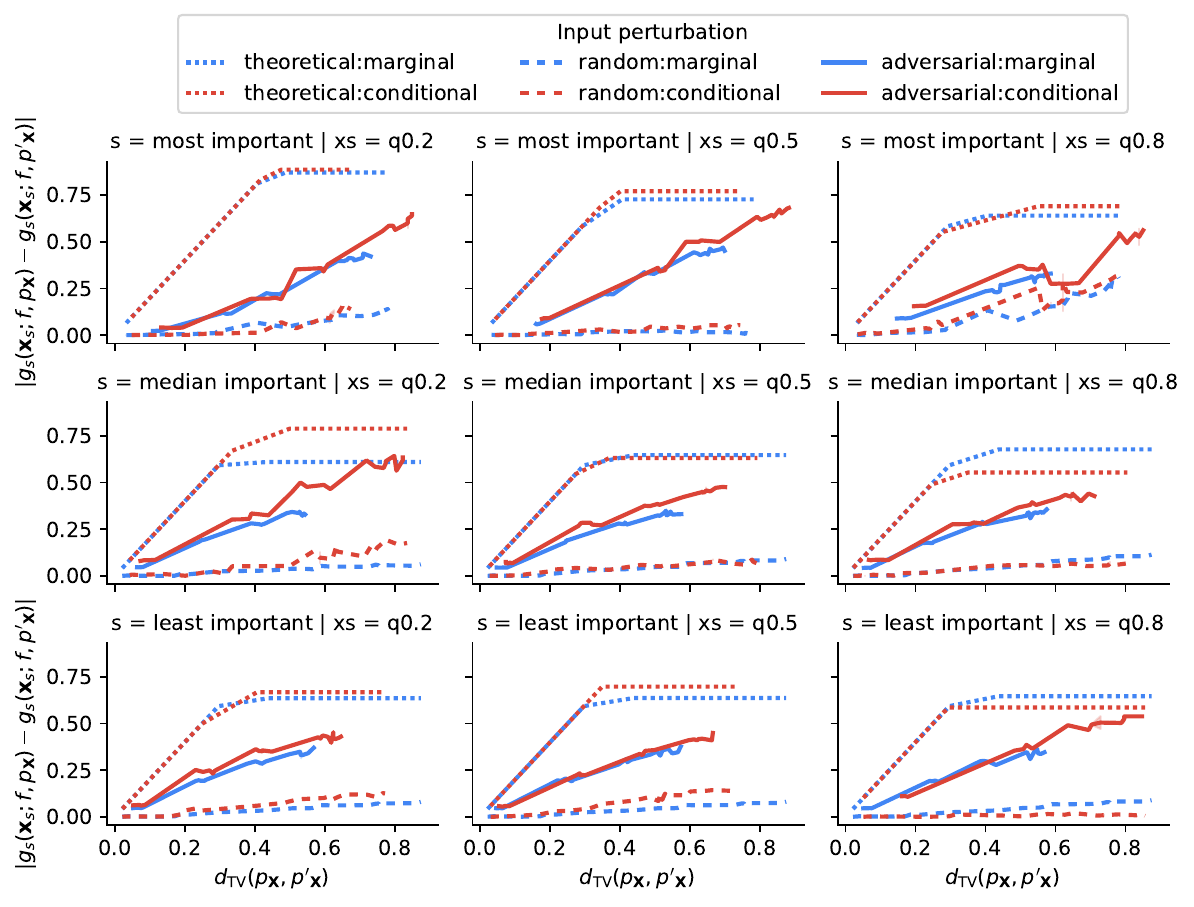}
    \caption{Robustness of feature effects to data perturbation for a neural network trained on the \textbf{Pima} dataset. Feature effect is evaluated for a feature $s$ (rows) and quantile value $\bxs$ (columns). The dotted lines denote the approximated theoretical bound derived in Theorem~\ref{th:pd-data}, which ends with a case-specific constant value as described in Remark~\ref{remark:constant}. Results are with respect to estimating feature effects on marginal $\pX$ (partial dependence) or conditional distribution $\pXsbXs$ (conditional dependence). }
    \label{fig:exp1_pima}
\end{figure}

\paragraph{Results.} Figure~\ref{fig:exp1_pima} shows results for the Pima dataset. In most cases, it is possible to adversarially perturb input data to drastically change the explanation. Note that our theoretical bounds are only a worst-case scenario analysis that might not occur in practice. Moreover, the adversarial algorithm might not find the best solution to the optimization problem of maximizing the distance between explanations. Figure~\ref{fig:exp1_synthetic} shows results for the Synthetic dataset, where we observe that even with simple predictive tasks it can be hard to significantly manipulate feature effects. More analogous results for the HELOC dataset are in Appendix~\ref{app:experiments-data}. On average in our setup, $\textsc{pd}_s$ (marginal) is more robust to data perturbation than $\textsc{cd}_s$ (conditional). 

\begin{figure}[ht]
    \centering
    \includegraphics[width=0.9\textwidth]{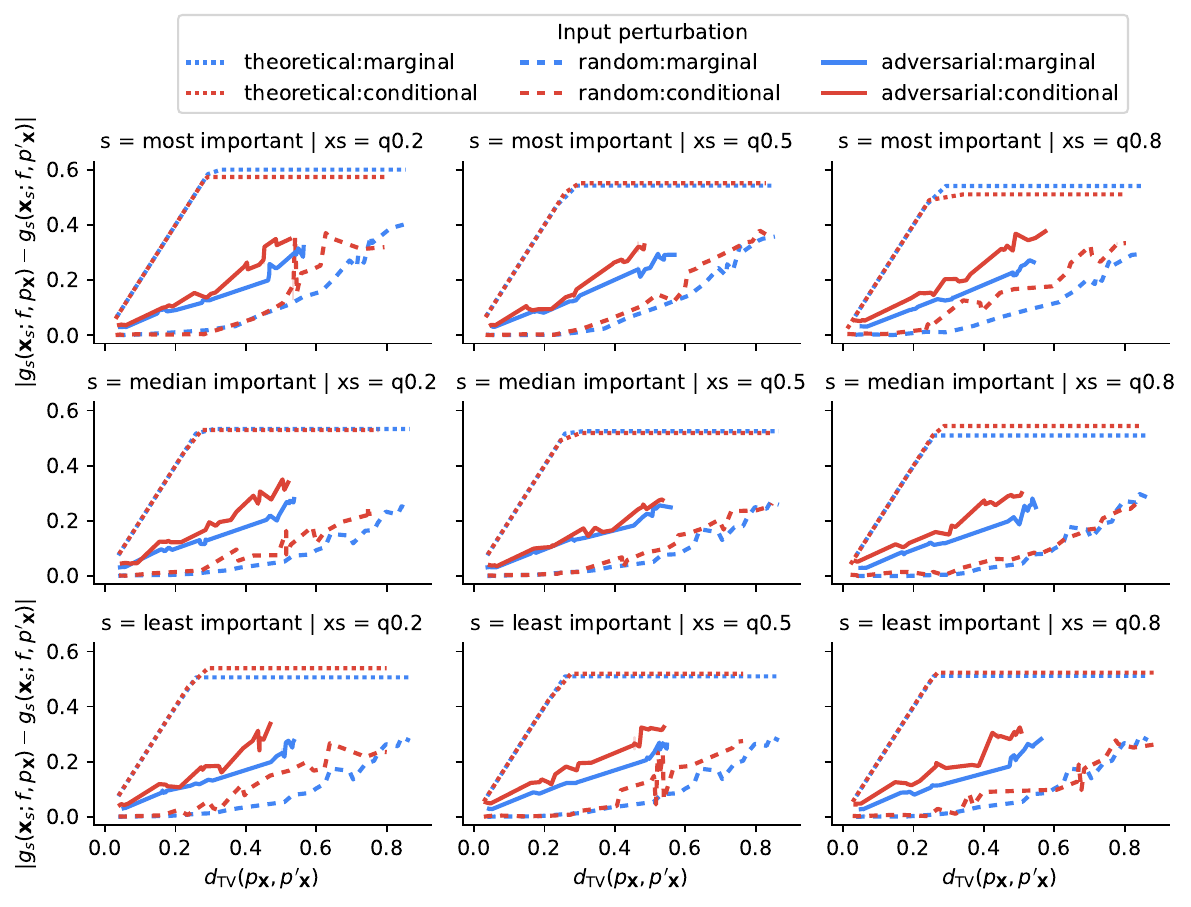}
    \caption{Robustness of feature effects to data perturbation for a neural network trained on the \textbf{Synthetic} dataset. Feature effect is evaluated for a feature $s$ (rows) and quantile value $\bxs$ (columns). The dotted lines denote the approximated theoretical bound derived in Theorem~\ref{th:pd-data}, which ends with a case-specific constant value as described in Remark~\ref{remark:constant}. Results are with respect to estimating feature effects on marginal $\pX$ or conditional distribution $\pXsbXs$. On average, partial dependence~(marginal) is more robust to data perturbation than conditional dependence.
    }
    \label{fig:exp1_synthetic}
\end{figure}

\subsection{Robustness to model perturbation}\label{sec:experiments-model}

Next, we aim to computationally analyze the relationship of how changes in the model parameters $\mD(f,f')$ affect changes in the resulting explanations $\mD(\be,\be')$. 

\paragraph{Setup.} We add to experiments the remaining datasets from OpenXAI that can include categorical features: Adult ($n=48842$, $p=13$) where the task is to predict whether an individual’s income exceeds \$50K per year, Credit (aka German, $n=1000$, $p=20$) where the task is to distinguish between a good or bad credit score, and Heart ($n=4240$, $p=16$) where the task is to predict whether the patient has a 10-year risk of future heart disease. To each dataset, there is a pretrained neural network with an accuracy of 85\%, 75\%, and 85\% respectively. We excluded the COMPAS dataset as the pretrained neural network does not outperform a logistic regression baseline (85.4\%), signaling that the model is underfitted, which might influence its robustness analysis. In this experiment, we want to aggregate results across all features and their values for three feature effects: marginal (partial, $\textsc{pd}_s$), conditional ($\textsc{cd}_s$), and accumulated ($\textsc{ale}_s$). Specifically, we explain features that have more than 2 unique values to exclude one-hot-encoded features for which accumulated local effects are not intuitive to estimate. Finally, we use $\widehat{\textsc{dale}}_s$~\citep{gkolemis2023dale} to accurately estimate $\textsc{ale}_s$ for neural networks. 

\paragraph{Model perturbation.} We perform model parameter randomization tests~\citep{adebayo2018sanity} for global feature effect explanations. The idea is to sequentially perturb weights in consecutive layers of a neural network starting from the end. It was previously shown that gradient-based explanations from the class of local feature attributions are not significantly affected by such a perturbation, which might not be a desired property of an explanation method~\citep{adebayo2018sanity}. To implement model parameter randomization tests for the pretrained 3-layer neural networks, we add Gaussian noise $\mN(0, \sigma=0.5)$ to the weights. We repeat the test 20 times and visualize the average result with a standard error.

\paragraph{Results.} Figure~\ref{fig:exp2} shows results for all datasets. We observe, as expected, that feature effects are influenced by perturbing weights of a neural network. In many cases, (differential) accumulated local effects do not pass the model randomization test, i.e., are significantly less affected by drastic model perturbation than partial and conditional dependence. Our result is consistent with the work comparing removal-based to gradient-based local feature attributions~\citep{lin2023robustness}. In Appendix~\ref{app:experiments-model}, we provide more analogous results for different $\sigma$ values and report the drop in model predictive performance after perturbations.

\section{Conclusion}

We derived theoretical bounds for the robustness of feature effects to data and model perturbations. They give certain guarantees and intuition regarding how adversarial perturbations influence global explanations of ML models. Our theory can guide future work on improving these explanation methods to be more stable and faithful to the model and data distribution. We made several valuable connections to previous work, e.g., concerning the robustness of local feature attributions~\citep{lin2023robustness}, adversarial attacks on partial dependence~\citep{baniecki2022fooling}, and model parameter randomization tests for gradient-based explanations~\citep{adebayo2018sanity}.

Experimental results show that, on average, partial dependence is more robust to data perturbation than conditional dependence. Moreover, accumulated local effects do not pass the model randomization test, i.e., are significantly less affected by drastic model perturbation than partial and conditional dependence.

\paragraph{Limitations and future work.} Theorem~\ref{th:ale-data} assumes \emph{model $f$ is $L$-Lipschitz continuous} and future work can improve the bound to remove this assumption. It would say more about an explanation of a decision tree or, in general, a step-wise function that has infinite gradients not bounded by $L$. Theorems~\ref{th:ale-data}~\&~\ref{th:ale-model} are derived for the most popular case when $|s|=1$, but can be similarly derived for case when $|s|>1$. Our experiments are biased toward pretrained models from OpenXAI. Moreover, we acknowledge that the numerical approximation of total variation distance, as well as conditional distributions and effects, is prone to errors and might impact experimental results. Future theoretical and experimental work can analyze how feature dependence, e.g., correlation and interactions, impacts the robustness of global feature effects to model and data perturbation.

\begin{figure}[t!]
    \centering
    \includegraphics[width=0.49\textwidth]{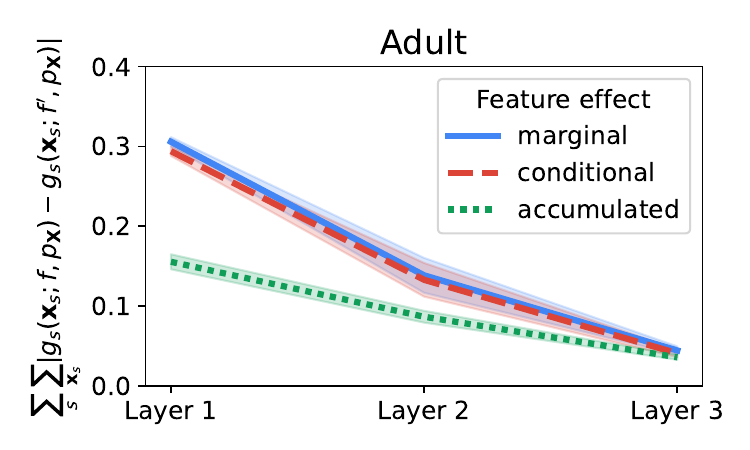}
    \includegraphics[width=0.49\textwidth]{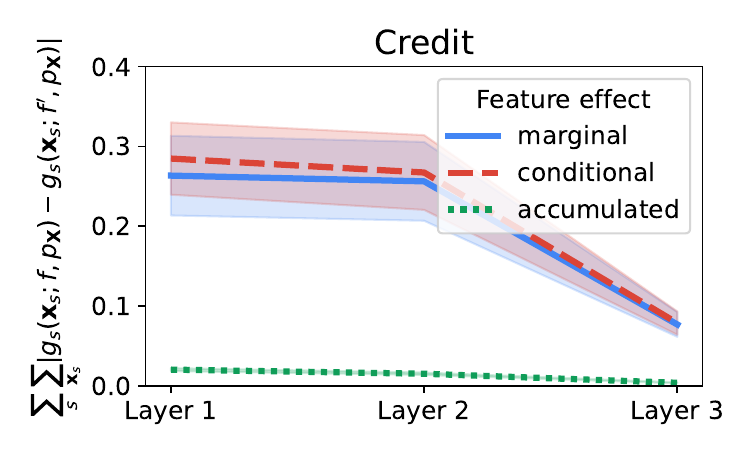}
    \includegraphics[width=0.49\textwidth]{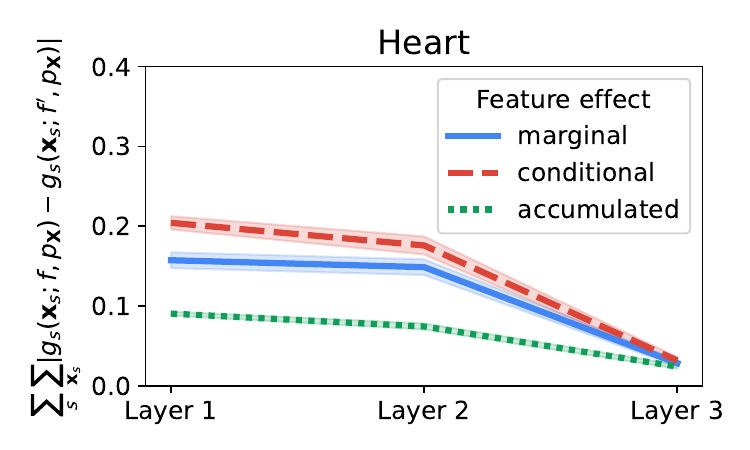}
    \includegraphics[width=0.49\textwidth]{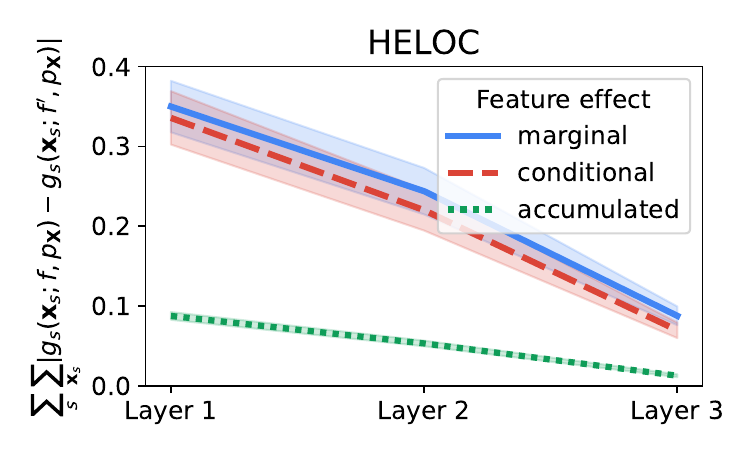}
    \includegraphics[width=0.49\textwidth]{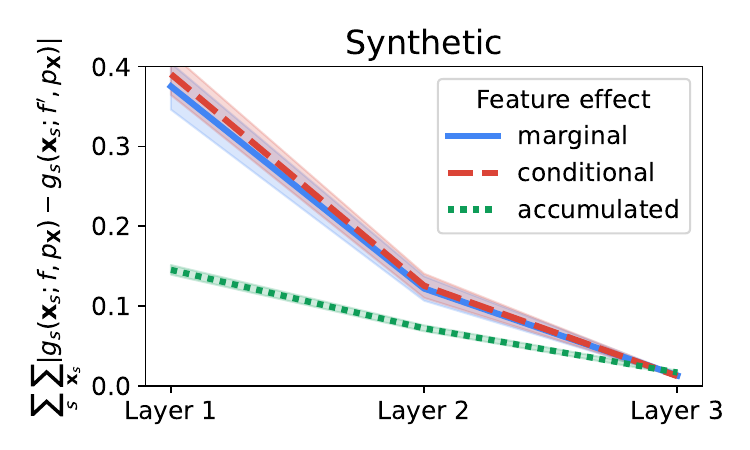}
    \includegraphics[width=0.49\textwidth]{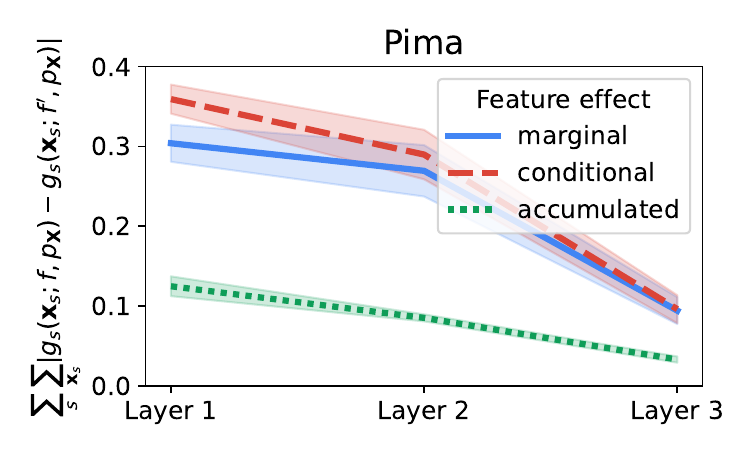}
    \caption{Robustness of feature effects to model perturbation for neural network models trained on six datasets. The X-axis denotes consecutive layers of a neural network being randomized sequentially. Values on the Y-axis are normalized per dataset and relate to the bounded distance between explanations in Lemma~\ref{lem:pd-model} and Theorem~\ref{th:ale-model}. Accumulated local effects do not pass the model randomization test, i.e., are significantly less affected by drastic model perturbation than partial (marginal) and conditional dependence.}
    \label{fig:exp2}
\end{figure}

\clearpage
\subsubsection*{Acknowledgements.} This work was financially supported by the Polish National Science Centre grant number 2021/43/O/ST6/00347.

%
% ---- Bibliography ----
\bibliographystyle{splncs04}
{\small\bibliography{references}}

\clearpage
\appendix

Here, we provide an additional description of methods and results regarding experiments done in Section~\ref{sec:experiments}.

\section{Experiments: robustness to data perturbation}\label{app:experiments-data}

\paragraph{Data perturbation.} We constraint to perturbing only the top 2 most important features (in the case when $s$ is the most important, we perturb the 2nd and 3rd important) as measured by the variance of partial dependence. 

In random perturbation, we add (once) univariate Gaussian noise $\mN(0, \sigma)$ with $\sigma = \{ 0.01, 0.05, 0.10,\allowbreak0.12, 0.25\}$ to each of the perturbed features. 

In adversarial perturbation, a genetic algorithm performs mutation, crossover, evaluation, and selection between a population of 100 individuals (dataset instances) for 200 iterations. 
In each iteration, mutation adds univariate Gaussian noise $\mN(0, \sigma)$ with $\sigma = \{ 0.01, 0.05, 0.10, 0.25, 0.33\}$ to each of the perturbed features. 
It always checks if any new value is out of distribution (edges of the domain of a particular feature) and if so, samples a new value from the original distribution. This is to constrain the perturbation to the data manifold. 

A crossover operator exchanges values in corresponding row/column indices ($i,j$) of the dataset between the two parent individuals to generate new child individuals. 
Evaluation of individuals considers calculating a fitness function, which here is a distance between the original explanation value (e.g., $0.53$) and a target (in this case $0$ or $1$). 
Finally, the algorithm uses a standard roulette wheel selection to choose individuals for the next iteration. For further details on the method, refer to the original article~\citep{baniecki2022fooling}. We set the remaining hyperparameters to default values.

We repeat random and adversarial perturbations 5 times and visualize all the obtained results.

\paragraph{Additional results.} Figure~\ref{fig:exp1_heloc} shows results of the first experiment for the HELOC dataset. On average in our setup, partial dependence~(marginal) is more robust to data perturbation than conditional dependence

\begin{figure}[ht]
    \centering
    \includegraphics[width=0.99\textwidth]{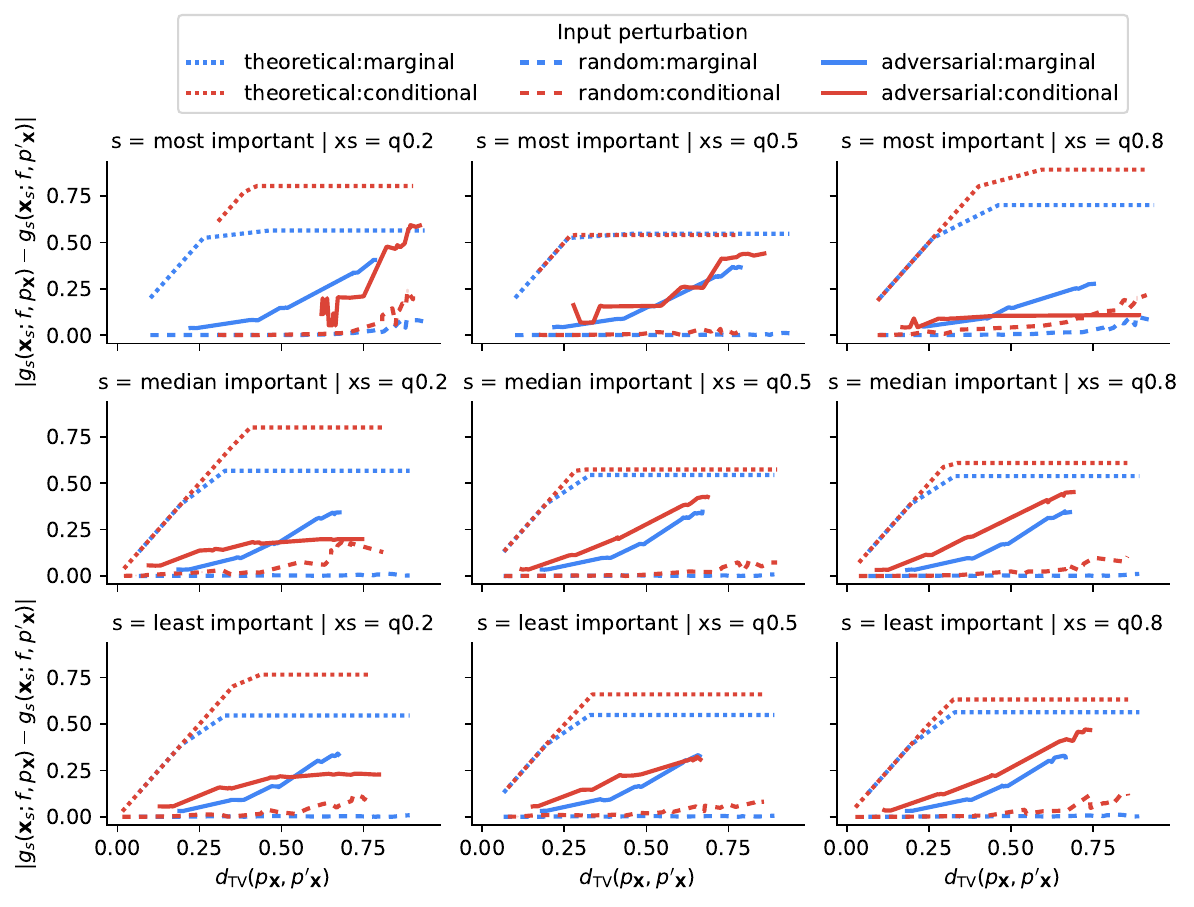}
    \caption{Robustness of feature effects to data perturbation for a neural network trained on the \textbf{HELOC} dataset. Feature effect is evaluated for a feature $s$ (rows) and quantile value $\bxs$ (columns). The dotted lines denote the approximated theoretical bound derived in Theorem~\ref{th:pd-data}, which ends with a case-specific constant value as described in Remark~\ref{remark:constant}. Results are with respect to estimating feature effects on marginal $\pX$ (partial dependence) or conditional distribution $\pXsbXs$ (conditional dependence).}
    \label{fig:exp1_heloc}
\end{figure}

\section{Experiments: robustness to model perturbation}\label{app:experiments-model}

\paragraph{Additional results.} Figures~\ref{fig:exp2_adult}, \ref{fig:exp2_credit}, \ref{fig:exp2_heart}, \ref{fig:exp2_heloc}, \ref{fig:exp2_synthetic} \& \ref{fig:exp2_pima} show additional results of the second experiment for all the datasets. We can observe how different $\sigma$ values impact the model parameter randomization test. For a broader context, we report the drop in model performance (accuracy) after each layer is sequentially perturbed. Clearly in cases where parameter perturbations are significant enough to impact model performance, (differential) accumulated local effects remain more robust (here, in a bad way) than partial and conditional dependence. Our result is consistent with the original work introducing model randomization tests for saliency maps~\citep{adebayo2018sanity}, as well as the work comparing removal-based to gradient-based local feature attributions~\citep{lin2023robustness}.

\begin{figure}
    \centering
    \includegraphics[width=0.49\textwidth]{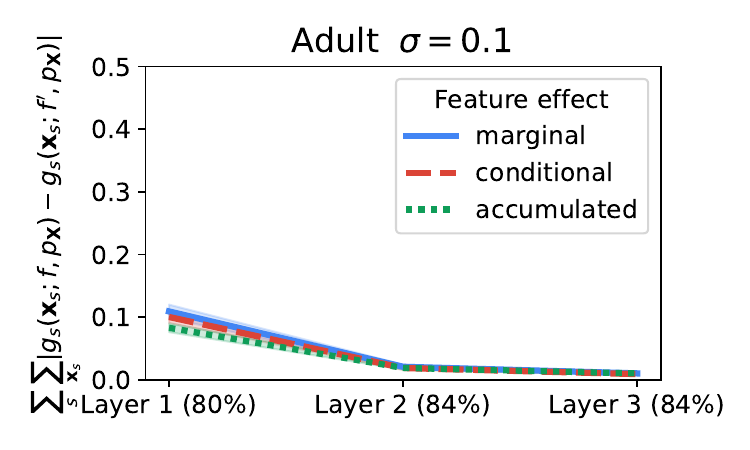}
    \includegraphics[width=0.49\textwidth]{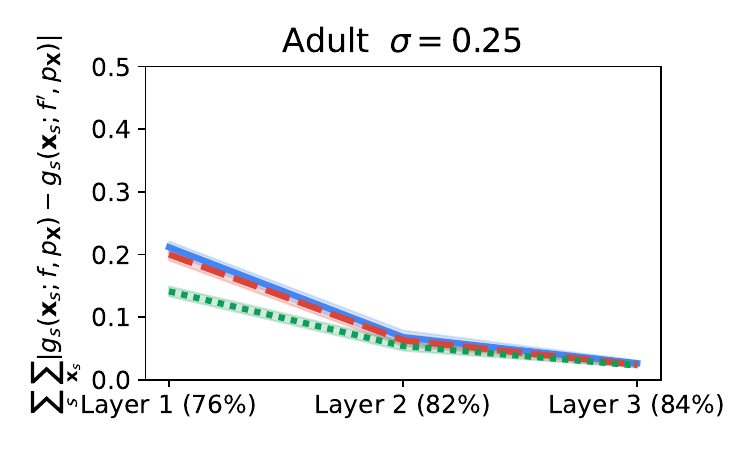}
    \includegraphics[width=0.49\textwidth]{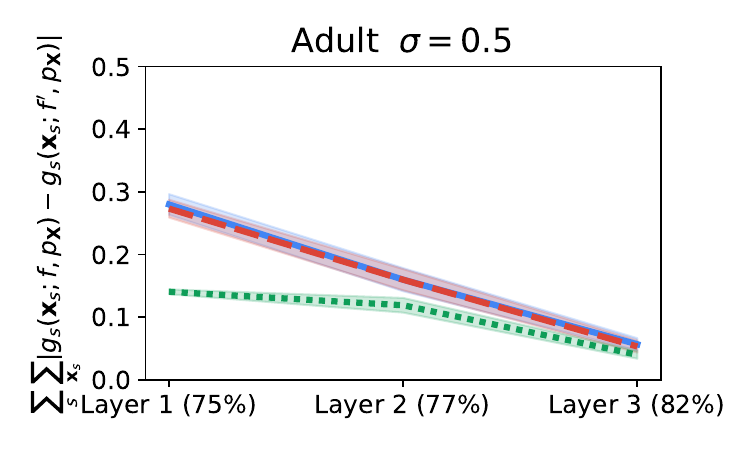}
    \includegraphics[width=0.49\textwidth]{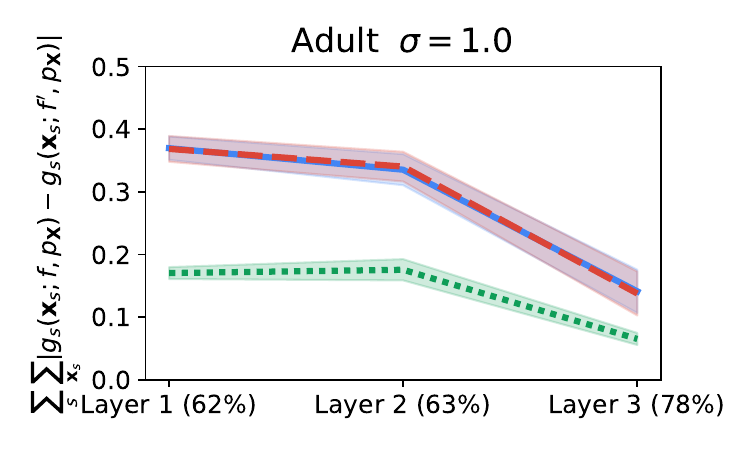}
    \caption{Robustness of feature effects to model perturbation for a neural network trained on the \textbf{Adult} dataset (85\% accuracy). The X-axis denotes consecutive layers of a neural network being randomized sequentially. Values in parentheses report the accuracy of the model after each layer is perturbed. Values on the Y-axis are normalized per dataset and relate to the bounded distance between explanations in Lemma~\ref{lem:pd-model} and Theorem~\ref{th:ale-model}.}
    \label{fig:exp2_adult}
\end{figure}

\begin{figure}
    \centering
    \includegraphics[width=0.49\textwidth]{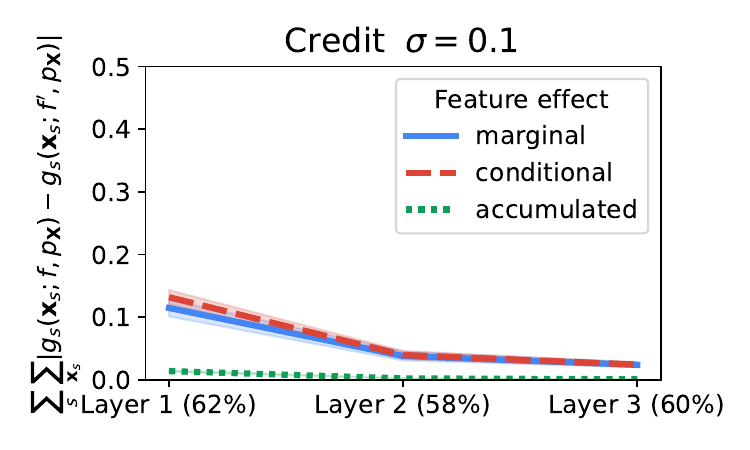}
    \includegraphics[width=0.49\textwidth]{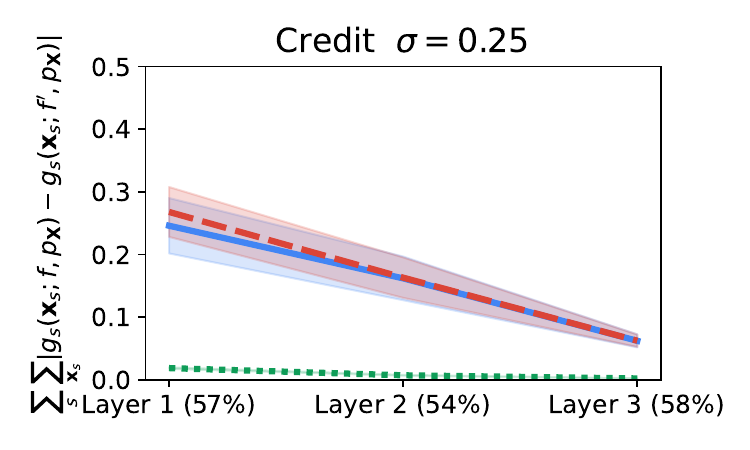}
    \includegraphics[width=0.49\textwidth]{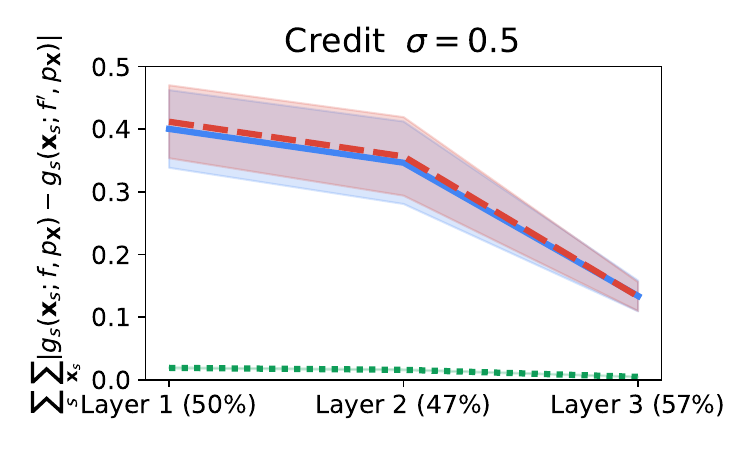}
    \includegraphics[width=0.49\textwidth]{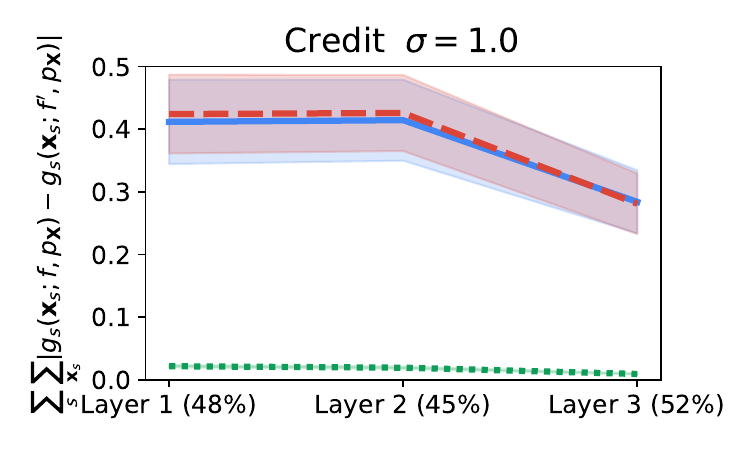}
    \caption{Robustness of feature effects to model perturbation for a neural network trained on the \textbf{Credit} dataset (75\% accuracy). Values in parentheses report the accuracy of the model after each layer is perturbed.}
    \label{fig:exp2_credit}
\end{figure}

\begin{figure}
    \centering
    \includegraphics[width=0.49\textwidth]{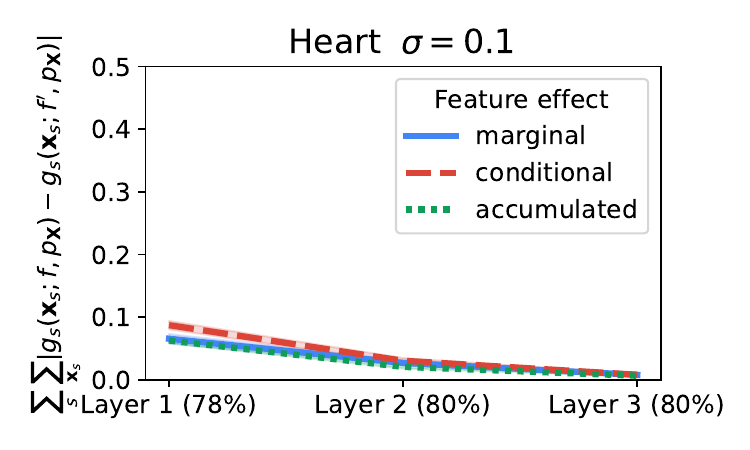}
    \includegraphics[width=0.49\textwidth]{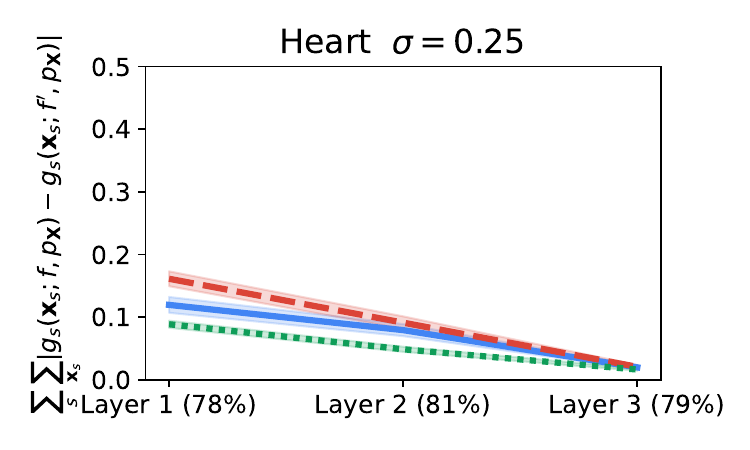}
    \includegraphics[width=0.49\textwidth]{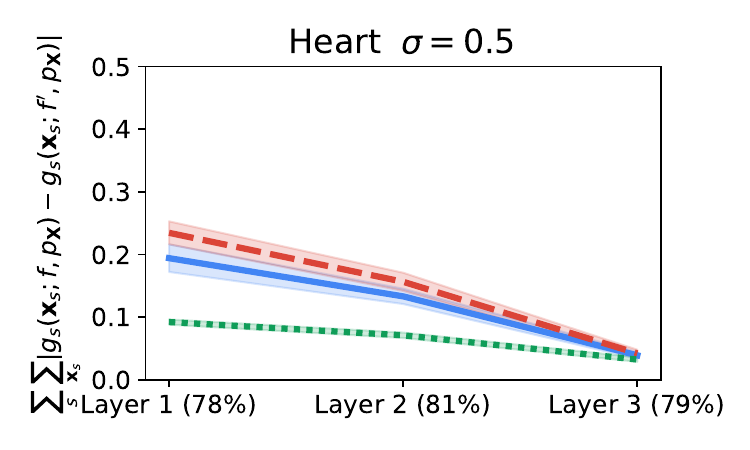}
    \includegraphics[width=0.49\textwidth]{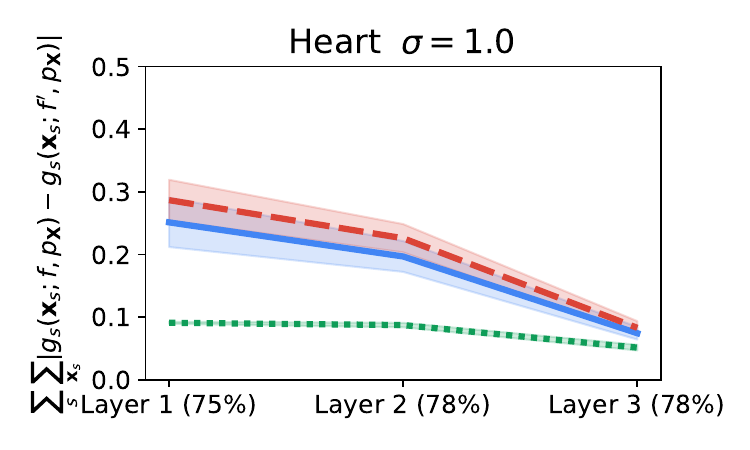}
    \caption{Robustness of feature effects to model perturbation for a neural network trained on the \textbf{Heart} dataset (85\% accuracy). Values in parentheses report the accuracy of the model after each layer is perturbed. The X-axis denotes consecutive layers of a neural network being randomized sequentially. Values in parentheses report the accuracy of the model after a layer is perturbed. Values on the Y-axis are normalized per dataset and relate to the bounded distance between explanations in Lemma~\ref{lem:pd-model} and Theorem~\ref{th:ale-model}.}
    \label{fig:exp2_heart}
\end{figure}

\begin{figure}
    \centering
    \includegraphics[width=0.49\textwidth]{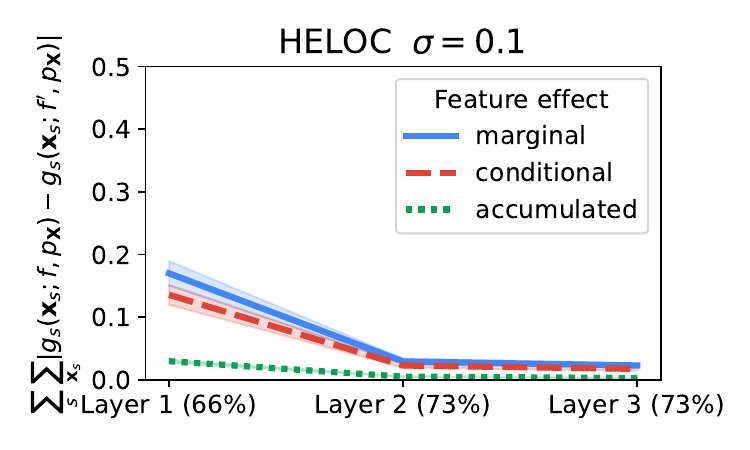}
    \includegraphics[width=0.49\textwidth]{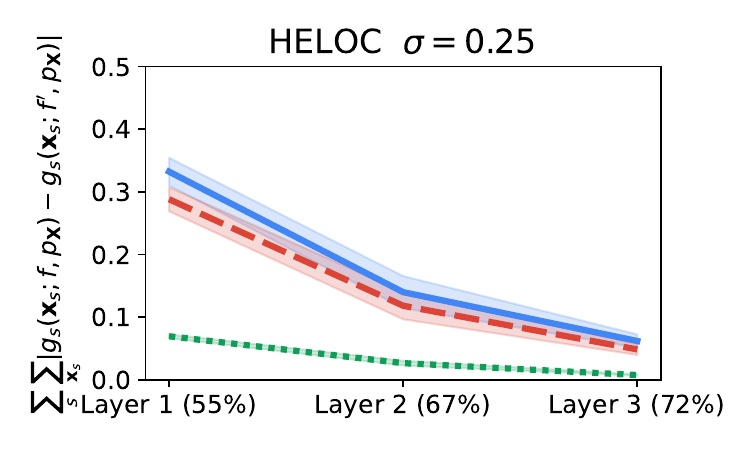}
    \includegraphics[width=0.49\textwidth]{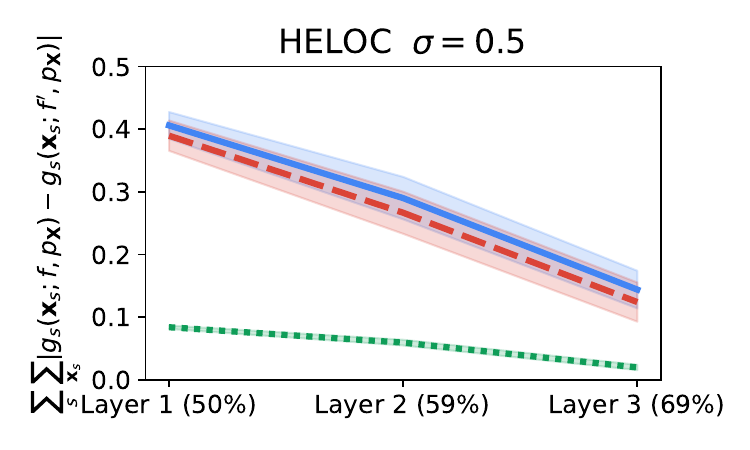}
    \includegraphics[width=0.49\textwidth]{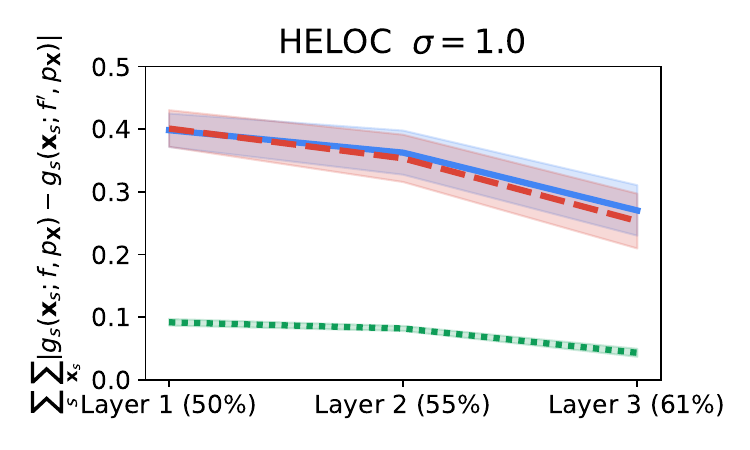}
    \caption{Robustness of feature effects to model perturbation for a neural network trained on the \textbf{HELOC} dataset (74\% accuracy). Values in parentheses report the accuracy of the model after each layer is perturbed.}
    \label{fig:exp2_heloc}
\end{figure}

\begin{figure}
    \centering
    \includegraphics[width=0.49\textwidth]{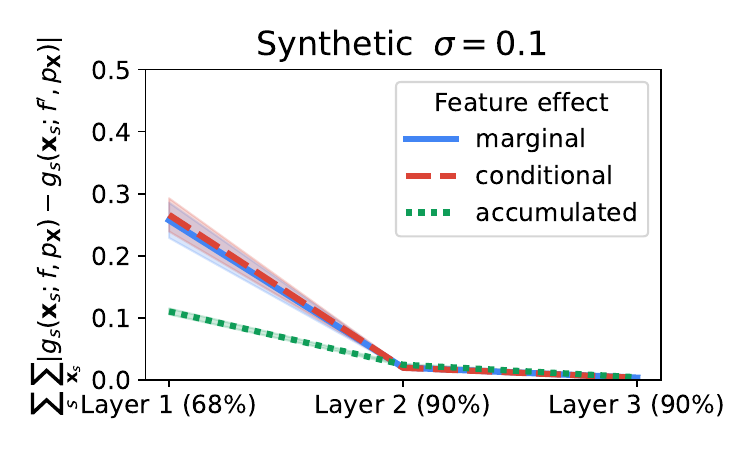}
    \includegraphics[width=0.49\textwidth]{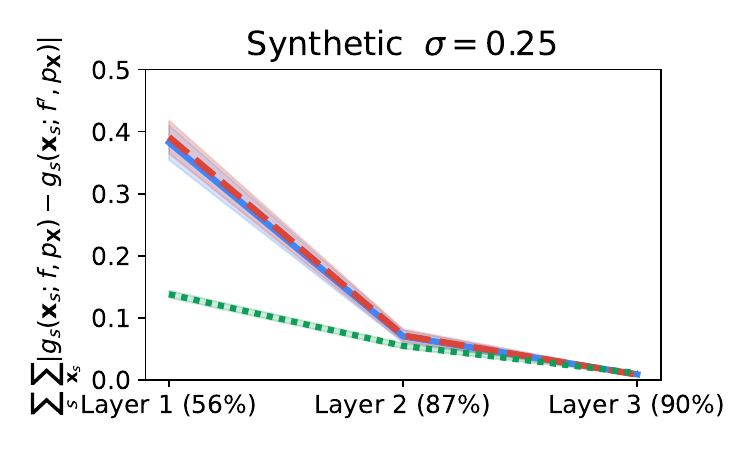}
    \includegraphics[width=0.49\textwidth]{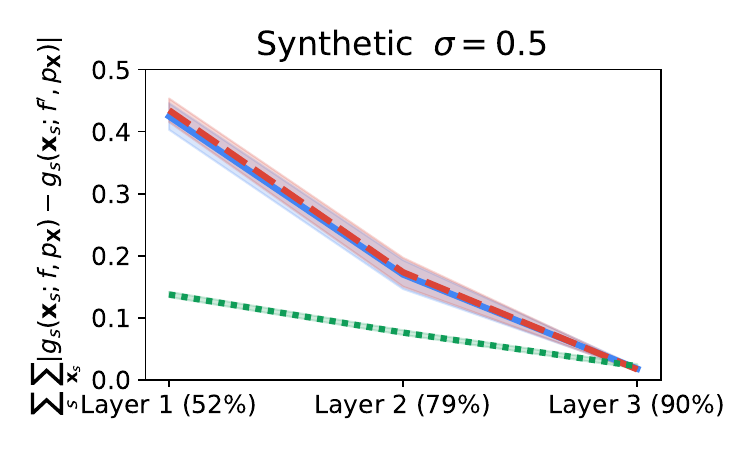}
    \includegraphics[width=0.49\textwidth]{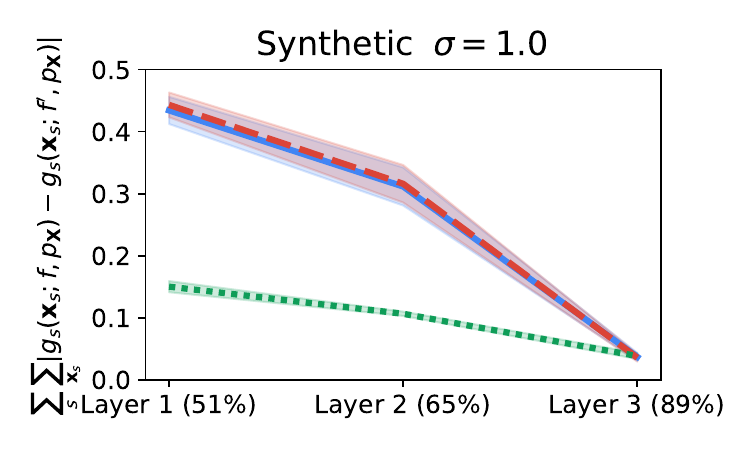}
    \caption{Robustness of feature effects to model perturbation for a neural network trained on the \textbf{Synthetic} dataset (92\% accuracy). The X-axis denotes consecutive layers of a neural network being randomized sequentially. Values in parentheses report the accuracy of the model after each layer is perturbed. Values on the Y-axis are normalized per dataset and relate to the bounded distance between explanations in Lemma~\ref{lem:pd-model} and Theorem~\ref{th:ale-model}.}
    \label{fig:exp2_synthetic}
\end{figure}

\begin{figure}
    \centering
    \includegraphics[width=0.49\textwidth]{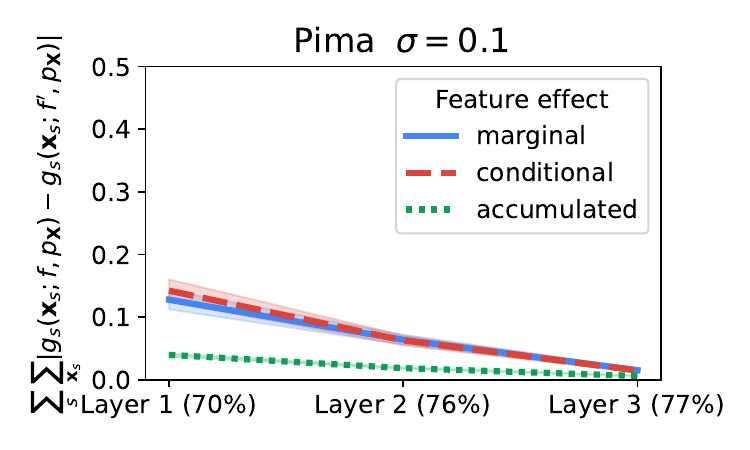}
    \includegraphics[width=0.49\textwidth]{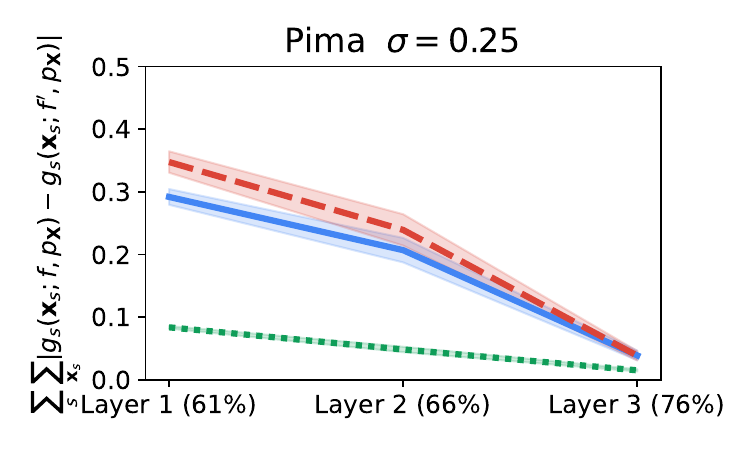}
    \includegraphics[width=0.49\textwidth]{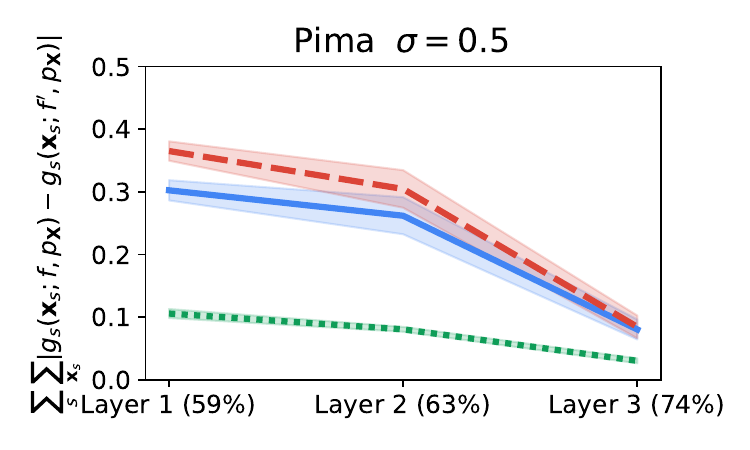}
    \includegraphics[width=0.49\textwidth]{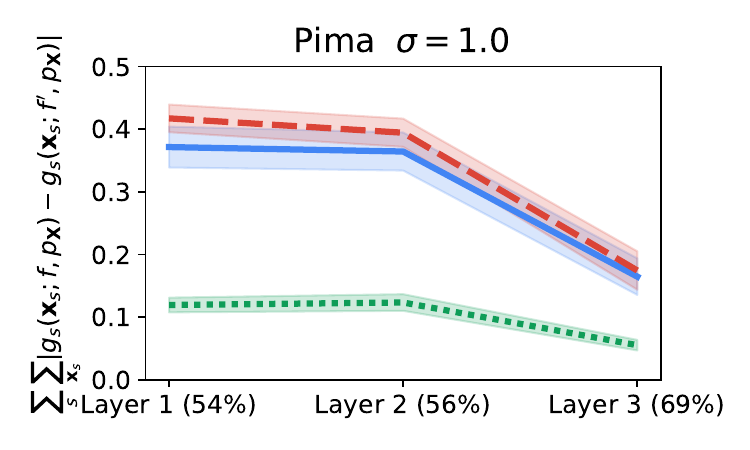}
    \caption{Robustness of feature effects to model perturbation for a neural network trained on the \textbf{Pima} dataset (77\% accuracy). Values in parentheses report the accuracy of the model after each layer is perturbed.}
    \label{fig:exp2_pima}
\end{figure}

\end{document}